\theoremstyle{plain}
\newtheorem{theorem}{Theorem}[section]
\newtheorem{proposition}[theorem]{Proposition}
\newtheorem{lemma}[theorem]{Lemma}
\theoremstyle{definition}
\newtheorem{definition}[theorem]{Definition}
\theoremstyle{remark}
\DeclareMathOperator*{\argmin}{argmin}
\DeclareMathOperator*{\argmax}{argmax}
\title{On Minimizing Adversarial Counterfactual Error in Adversarial RL}
\author{Roman Belaire\\
Singapore Management University\\
Singapore\\
\texttt{rbelaire.2021@phdcs.smu.edu.sg}\\
\And
Arunesh Sinha\\
Rutgers University\\
New Brunswick, NJ\\
\texttt{arunesh.sinha@rutgers.edu}\\
\And
Pradeep Varakantham\\
Singapore Management University\\
Singapore\\
\texttt{pradeepv@smu.edu.sg}
}
\begin{document}

\maketitle
\begin{abstract}
 Deep Reinforcement Learning (DRL) policies are highly susceptible to adversarial noise in observations, which poses significant risks in safety-critical scenarios. The challenge inherent to adversarial perturbations is that by altering the information observed by the agent, the state becomes only partially observable. Existing approaches address this by either enforcing consistent actions across nearby states or maximizing the worst-case value within adversarially perturbed observations. However, the former suffers from performance degradation when attacks succeed, while the latter tends to be overly conservative, leading to suboptimal performance in benign settings. We hypothesize that these limitations stem from their failing to account for partial observability directly. To this end, we introduce a novel objective called Adversarial Counterfactual Error (ACoE), defined on the beliefs about the true state and balancing value optimization with robustness. To make ACoE scalable in model-free settings, we propose the theoretically-grounded surrogate objective Cumulative-ACoE (C-ACoE). Our empirical evaluations on standard benchmarks (MuJoCo, Atari, and Highway) demonstrate that our method significantly outperforms current state-of-the-art approaches for addressing adversarial RL challenges, offering a promising direction for improving robustness in DRL under adversarial conditions. Our code is available at https://github.com/romanbelaire/acoe-robust-rl.
\end{abstract}

\section{Introduction}

The susceptibility of Deep Neural Networks (DNNs) to adversarial attacks on their inputs is a well-documented phenomenon in machine learning \citep{goodfellow, madry2018:adv}. Consequently, Deep Reinforcement Learning (DRL) models are also vulnerable to input perturbations, even when the environment remains unchanged \citep{gleaveatk,Sun_Zhang_Xie_Ma_Zheng_Chen_Liu_2020,pattanaik2017robust}. As DRL becomes increasingly relevant to real-world applications such as self-driving cars, developing robust policies is of paramount importance \citep{spielberg2019carla,kiran2021deep}. An example highlighted by \citet{shapeshifter} successfully alters a stop sign both digitally and physically to deceive an object recognition model, demonstrating the ease and potential dangers of such adversarial attacks. 

Adversarial retraining, which entails inserting adversarial perturbations to the replay buffer during training, effectively enhances the robustness of deep reinforcement learning (DRL) against known adversaries \citep{gleaveatk,goodfellow,pattanaik2017robust,sun2023strongest}. However, this approach often fails to generalize well to out-of-sample adversaries \citep{gleaveatk,Guo:Patrol}. More importantly, it is well-known that stronger adversaries can always be found \citep{madry2018:adv} and that the high-dimensional observation spaces of real problems offer an overwhelming number of adversarial directions \citep{korkmaz:redefining, zLiu:MaxRewardAttack}. Furthermore, due to issues such as catastrophic forgetting, naive adversarial retraining in reinforcement learning can result in unstable training processes and diminished agent performance \citep{zhangsadqn}.
This highlights the need for algorithms that are not tailored to niche adversarial perturbations but are generally robust. Rather than develop a policy that is value-optimal for a set of known adversarial examples, our goal is to identify and mitigate behaviors and states that introduce unnecessary risk.
A widely-recognized method to achieve general robustness is the maximin optimization, which seeks to maximize the minimum reward of a policy \citep{everett:carrl,liang2022efficient}. While this approach does enhance safety, it often sacrifices the quality of the unperturbed solution to improve the worst-case scenario.

Another prevalent robustness mechanism strengthens ``non-adversarial value" optimizing policies (i.e. vanilla policies) by incorporating adversarial loss regularization terms, ensuring robust policies are close to the ``non-adversarial value" optimal policies. This aims to ensure that actions remain consistent across similar observations, thereby reducing the likelihood of successful adversarial attacks \citep{oikarinen2021robust,zhangsadqn,liang2022efficient}. However, prior empirical findings indicate that these methods still leave policies vulnerable when attacks do succeed~\citep{belaire-ccer}, as the observation space is high-dimensional; it is not feasible to ensure \textit{all} similar observations have similar actions. %as these methods inherit the value-optimizing trajectories' low margin for error.

Adversarial perturbations make the ground truth partially observable and this aspect--though acknowledged--has not been explicitly reasoned within existing work, except recently in \citet{liu-protected, McMahan:Optimal}, the best-performing of which is called Protected~\citep{liu-protected}. However, the Protected framework requires multiple adaptation runs at test time to achieve better performance than existing work. The requirement for multiple execution runs in the presence of an adversary at test time is not viable in self-driving cars and other real-world scenarios. To that end, we introduce a novel objective called Adversarial Counterfactual Error (ACoE), which calculates the error due to adversarial perturbations by explicitly considering the belief distribution over the underlying true state.

\noindent \textbf{Contributions:}
\begin{itemize}
    \item In a significant departure from previous research, we address the partial observability present in adversarial RL problems (due to adversarial perturbations) by introducing the concept of Adversarial Counterfactual Error (ACoE), which is defined based on beliefs about the underlying true state rather than the observable state only. %This shift is necessitated by the partial observability of the underlying state due to adversarial perturbations.
     \item We introduce a scalable surrogate for ACoE called Cumulative ACoE (C-ACoE) and establish its fundamental theoretical properties, which aid in developing strong solution methods. 
     \item We develop mechanisms to minimize C-ACoE while maximizing expected value by leveraging established techniques from Deep Reinforcement Learning (e.g., DQN, PPO).
    \item Finally, we present comprehensive experimental results on benchmark problems (MuJoCo, Atari, Highway) employed in adversarial RL area to demonstrate the effectiveness of our approaches compared to leading methods (e.g., Protected, RADIAL, RAD, WOCAR) for adversarial reinforcement learning. We test against potent myopic attacks (such as MAD, PGD) and more advanced macro-strategic adversaries such as PA-AD \citep{sun2023strongest}.
\end{itemize}

\section{Related work}
\noindent\textbf{Adversarial attacks in RL}: 
Deep RL is vulnerable to attacks on the input, ranging from methods targeting the underlying DNNs such as an FGSM attack \citep{huang,goodfellow}, tailored attacks against the value function \citep{kos,Sun_Zhang_Xie_Ma_Zheng_Chen_Liu_2020}, or adversarial behavior learned by an opposing policy \citep{gleaveatk,everett:carrl,oikarinen2021robust,zhangsadqn}. We compile attacks on RL loosely into two groups of learned adversarial policies: observation poisonings~\citep{gleaveatk,Sun_Zhang_Xie_Ma_Zheng_Chen_Liu_2020,lin:tactics2017,Guo:Patrol} and direct ego-state disruptions~\citep{pinto2017robust,rajeswaran2017epopt}. Each category has white-box counterparts that leverage the victim's network gradients to generate attacks \citep{goodfellow,oikarinen2021robust,huang,everett:carrl}. In this work (similar to existing works highlighted in this section), we focus on defending against the former group, observation poisonings, with both white-box and black-box scenarios. 

\noindent\textbf{Adversarial Retraining and Adversary Agnostic Approaches}:
In adversarial retraining, adversarial examples are found or generated and integrated into the set of training inputs~\citep{shafahi2019adversarial,ganin2016domain,wong2020fast,madry2018:adv,andriushchenko2020understanding,shafahi2020universal}.
For a comprehensive review, we refer readers to \citet{bai2021recent}. In RL, research efforts have demonstrated the viability of training RL agents against adversarial examples \citep{gleaveatk,bai2019model,pinto2017robust,tan2020robustifying,kamalaruban2020robust,sun2023strongest}. Training RL agents against known adversaries is a sufficient defense against known attacks; there are effective adversarial retraining methods grounded in many disciplines such as curriculum learning \citep{Wu:BCL}, policy-adversary training \citep{sun2023strongest} {and behavior cloning \citep{Nie:Lipschitz}.} However, novel or more general adversaries remain effective against this class of defense \citep{gleaveatk,kang2019transfer}. Furthermore, they often take longer to train (needing to train both victim and adversary policies). The adversarial retraining technique PA-ATLA-PPO \citep{sun2023strongest} reports needing 2 million training frames for MuJoCo-Halfcheetah. For comparison, both RAD \citep{belaire-ccer} and WocaR-PPO \citep{liang2022efficient} are adversary-agnostic methods, and require less than 40\% of the training frames. This paper focuses on adversary-agnostic defenses that do not train against specific adversaries in the environment. 

% \noindent\textbf{Adversary-agnostic approaches}:
% Unlike adversary-specific robustness training, the methods we term as ``adversary-agnostic'' do not train on a perturbed MDP. While the various forms of adversarial retraining have clear merit, they often take longer to train (needing to train both victim and adversary policies).  Furthermore, adversary-specific methods may have subpar performance against novel adversaries, in addition to well-known drawbacks such as catastrophic forgetting. \citep{zhangsadqn} demonstrates how naive adversarial retraining is a poor solution; it is shown in \citep{belaire-ccer} that even advanced retraining frameworks are not as generally robust.

\noindent\textbf{Robust Regularization}:
Regularization approaches \citep{zhangsadqn,oikarinen2021robust,everett:carrl} take vanilla value-optimized policies and robustify them to minimize the loss due to adversarial perturbations. These approaches utilize certifiable robustness bounds computed for neural networks when evaluating adversarial loss and ensure that the probability an attacker successfully changes the agent's actions is reduced using these lower bounds.
Despite lowering the likelihood of a successful attack, a successful attack (i.e., two close states have different actions creates vulnerability) is still just as effective. Previous works suggest the need to learn safe trajectories via robustness-specific objectives, rather than a robust decision classifier only \citep{belaire-ccer, liang2022efficient, li:optimal}, such that successful attacks (if any) are also less effective.

\noindent\textbf{Robust Control}:
Measuring and optimizing a regret value to improve robustness has been studied previously in uncertain Markov Decision Processes (MDPs)\citep{ahmed2013regret,Rigter_Lacerda_Hawes_2021,Adulyasak2015:umdp}. In RL, \citet{jin2018regret} establishes Advantage-Like Regret Minimization (ARM) as a policy gradient solution for agents robust to partially observable environments. In continuous time control, \citet{yang2023certified} studies the composition of robust control algorithms with a robust predictor of perturbed system dynamics. In contrast to policy regret, we form beliefs about true states and minimize the cumulative adversarial counterfactual error (a novel notion of action-regret) to ensure a robust policy is computed, also recognizing the partial observability present in the problem. 
\par

\noindent \textbf{Game Theoretic Approaches}:
A thread of approaches~\citep{McMahan:Optimal,Zhang:GameTheory2024} have employed partially observable stochastic games to represent problems of interest. A key advantage of game-theoretic approaches is their ability to reason about adversaries. However, they assume that an adversary is always present--this can result in conservative solutions--and typically are computationally heavy. We do not use equilibrium concepts to ensure there is a good balance between robustness and ``non-adversarial value'' maximization. Instead, our risk-reward balance is computed based on the empirical belief about the adversary obtained from observations.

{\noindent\textbf{Partially Observable Adversaries}:
Several prior works \citep{jin2018regret, zhangsadqn,liu-protected} have acknowledged and considered that adversarial observation perturbations make the underlying state partially observable. This has resulted in improved results. However, there are a few fundamental differences in how partial observability is considered in the most recent work~\citep{liu-protected} and our contributions:
\begin{itemize}
    \item Partial observability is captured using a history of observations that does not consider that this partial observability is being driven by an adversary (i.e., with intention). The partial observability present in adversarial RL is not the same as in Partially Observable MDPs, where partial observability is a facet of the agent sensor (that is only stochastic, not adversarial). In our work, our belief state computation (to account for partial observability) explicitly considers that an adversary is driving the observation. 
    \item In training, they compute a set of non-dominated policies to execute at test time. Then, they do test time adaptation, performing regret minimization over multiple (800) complete runs of the policy against the adversary. This is effective, though unfortunately impractical in domains such as autonomous vehicle control, where adapting to an adversary \textit{after} a catastrophe is not acceptable. Thus, such test time adaptation has not been utilized in any of the existing works, including ours. 
    \item They do not adapt at every time step (which is feasible in RL settings based on observations), but rather wait until the end of each episode to adapt their policy meta-weights. Because time-step-wise interaction and adaptation fit within RL settings, we consider the adversarial susceptibility of actions at every time step based on the estimated belief and act accordingly.  
\end{itemize}}

%also formalize the adversarial observation perturbations as a Partially-Observable MDP (POMDP) robust RL space, although all existing solutions only go as far as to train robustness against sets of known adversaries (i.e. they fall under adversarial retraining). For instance the most recent, the Protected framework \citep{liu-protected}, performs several (50) iterations of best-response adversarial retraining, with the agent and adversary policies learning concurrently to minimize value and maximize regret respectively, and find robust solutions via on-line fine tuning. These prior methods utilize a history to reconcile the belief state, as a state history is a sufficient statistic to represent the belief of a POMDP. However, the history-based belief methods above require exposition to the adversaries during training. As such, our novel belief functions provide a formal structure by which belief is estimated \textit{without} knowledge of the adversary. }

\section{Adversarial Counterfactual Error (ACoE)}

In this section, we define the ACoE objective for the Adversarial Reinforcement Learning (RL) problem.  Intuitively, ACoE refers to the difference in the expected value obtained by a defender in the absence of adversarial perturbations versus in the presence of an adversary. It should be noted that in the case of adversarial perturbations, the defender only receives the altered state, and no information that is verified to be uncorrupted. By minimizing the ACoE objective in conjunction with maximizing expected value, we aim to derive a policy that provides a good trade-off between robustness (against adversary perturbations) and effectiveness (accumulating reward). 
%We start with a mathematical analysis in a model based setting as a pathway to the model free setting in the next section. This section, also motivates the need for considering belief over underlying state in the adversary case.   

\textbf{Expected value without adversarial perturbations, $V(s)$:}

In the case without adversarial perturbations, the defender's problem is one of an infinite-horizon MDP. Formally, we define the MDP $\langle \mathcal{S}, \mathcal{A}, T, R, \gamma \rangle $ where $\mathcal{S}$ is the state space, $ \mathcal{A}$ is the action space, $ T(s'~|~s,a)$ is transition probability, $R(s,a)$ is the immediate reward, and $\gamma$ is the discount factor. Without loss of generality, we assume $R(s,a) \in [0,1]$. For ease of presentation, we assume discrete states and actions in the mathematical sections. 
The aim in the MDP is to choose actions at every time step (specified as a policy $\pi$) that maximize the value function $V$. In infinite-horizon MDPs, the optimal policy is memoryless and stationary, i.e., a function of only the current state. However, to be more general and keep consistent notation with the case where there is an adversarial partially observable case below, we use $I$ as the current information state, i.e., $I$ is the sequence of observed states and actions up to the present, and the policy computes the action as a function of $I$, $\pi(I)$. Note that this is without loss of generality, as the optimal policy in an MDP will simply ignore the history preceding the current state. Then, the value for a policy $\pi$ is given by
\begin{align*}
& V(s) = R(s, \pi(I)) + \gamma E_{s' \sim T(\cdot |s, \pi(I))}[V(s')]
\end{align*}

\textbf{Expected value with adversarial perturbations, $U(b)$:} 

In the case of an adversarial perturbation, the defender only receives an altered observation, providing only partial information about the underlying true state (i.e., the true state is near the perturbed state). Formally, we define the adversary's policy as a function, $\nu: \mathcal{S} \rightarrow \Delta (\mathcal{S})$, where $\Delta(\mathcal{S})$ denotes all possible distributions over $\mathcal{S}$; we also abuse notation slightly to indicate the perturbed random state as $\nu(s)$.
We follow the standard assumption in adversarial learning that the perturbed state is close to the true underlying state, i.e., $||\nu(s) - s||_{\infty} \leq \epsilon$. This is an example of a one-sided Partially Observable Stochastic game (POSG)~\citep{horak2023solving} in which the adversary has full observability while the defender does not observe the underlying state and only observes the perturbed state. It is well known~\citep{horak2023solving} that with a fixed adversarial perturbation policy (possibly randomized), the defender's problem reduces to a Partially Observable Markov Decision Process (POMDP).

A POMDP is an MDP where the state is only partially observed. This partial observability is captured using an observation space $\mathcal{O}$ and observation probability $P_o(o ~|~ s', a)$ that specifies the probability of observing $o$ given true state $s'$ obtained on taking action $a$. Further, a POMDP is known to be equivalent to a belief state MDP~\citep{kaelbling1998planning} where states are beliefs over the underlying states in the POMDP. A belief state, $b$ is a probability distribution over underlying states, $s$, where $\sum_s b(s) = 1$. On taking actions, this belief state changes and is computed by using a standard Bayesian update: %In particular, we write the next belief as a function $SE$ of current belief, observation, and action: $b' = SE(b,o,a)$:
$$
b'(s') = \frac{P_o(o~|~s',a) \sum_s T(s'~|~s,a)b(s)}{P_o(o~|~b,a)} \mbox{ where } P_o(o~|~b,a) = \sum_{s'} P_o(o~|~s',a) \sum_s T(s'~|~s,a)b(s)
$$
We will employ a short form to represent the above update, $b' = SE(b,o,a)$. 
As the belief update requires knowledge of the model (transition function), our initial mathematical analysis is in a model-based framework. An optimal policy in a POMDP can be a function of the belief. However, it is known that for POMDPs, belief $b$ is a sufficient statistic for information state $I$, so we can consider the more general policy that depends on $I$, without any loss of generality.
We denote by $U$ the value function of this POMDP for policy $\pi$:
\begin{align*}
& U(b) = R(b, \pi(I)) + \gamma \sum_o P_o(o~|~b,\pi(I)) U(SE(b,o,\pi(I)))
\end{align*}
The partial observability exhibited in adversarial RL has a particular structure in which the observation space $\mathcal{O}$ is the same as the state space $\mathcal{S}$, and the observation probability function $P_o(o ~|~ s, a)$ is governed by the adversary's perturbation policy. More specifically, in our problem, the observation probability depends only on the true state and not the defender action, thus, we write $P^{\nu}_o(o~|~s)$, but note that $b'=SE(b,o,a)$ still depends on $a$ due to the use of transition $T$. Note that the non-adversarial case can be considered a special case where the adversary policy is the identity function $\mathsf{id}$, and then $P^{\mathsf{id}}_o(o ~|~ s) = \mathbb{I}(o = s)$ for the indicator function $\mathbb{I}$. As the observation space $\mathcal{O} = \mathcal{S}$, we will often use the notation $s_o$ to refer to an observation as $s_o \in \mathcal{S}$ where the subscript $o$ is used to denote that this is an observation. In particular, any distribution over the observation space is a distribution over the state space.

%we can write:
%$$
%T(b'~|~b,a) = \sum_{o} P(b'~|~b,o, a)P(o~|~b,a)
%$$

%In the ideal \emph{model based} scenario an optimal policy is a function of the belief over current states, where belief is updated using Bayes rule and known transition dynamics $T(s' | s, a)$. 
\textbf{Adversarial Counterfactual Error, ACoE}: We analyze the difference in return $V - U$ obtained in the non-adversary case (denoted by $V$) and adversary case (denoted by $U$) using a common policy $\pi$ in each case. We term $V-U$ as Adversarial Counterfactual Error (ACoE). 
As the optimal policy depends on different information structures in these two cases, to compare these cases with the same policy, we have already chosen to generalize the policy as a function of the information state $I$. 
We write the value functions starting with the currently observed belief, where the non-adversarial case is the true state itself. For notational ease in the later sections, we will write $s_o$ to represent the current observation, which particularly emphasizes that in our problem, the observations are themselves part of the state space. Further, in our particular domain, $o \in \mathcal{S}$,
%and observations do not depend on defender action, 
thus, $P_o(\cdot~|~b, \pi(I))$ specifies a probability distribution over states. Thus, by renaming variables and dropping the dependence of observations on actions, we rewrite $\sum_o P_o(o~|~b,\pi(I)) U(SE(b,o,\pi(I)))$ as $E_{s'_o \sim P_o(\cdot~|~b, \pi(I))}[U(SE(b,s'_o,\pi(I))]$.
Then, for both the non-adversary and adversary scenarios, following standard MDP and POMDP facts, we have a recursive form as below:
\begin{align*}
 V(s_o) &= R(s_o, \pi(I)) + \gamma E_{s'_o \sim T(\cdot |s_o, \pi(I))}[V(s'_o)] \\
 U(b) &= R(b, \pi(I)) + \gamma E_{s'_o \sim P_o(\cdot|b,, \pi(I))}[U(SE(b,s'_o,\pi(I))]
\end{align*}

\begin{center}
ACoE is defined as $V(s_o)-U(b)$.
\end{center}

%In a real world setting, we do not know if an adversary is present or not; moreover, assigning a meaningful probability of the adversary being present is impractical. Also, as observed in many prior works~\citep{}, optimizing for an ever present adversary leads to loss in non-adversarial case. Thus, we set a goal of choosing a policy that minimizes Cumulative ACoE and also maximizes $V$. 

%Towards this, we need a policy that is well-defined in both cases.

We also use an additional shorthand notation of $T_o(\cdot, \cdot~|~b, a)$ to denote the joint probability distribution of $s'_o$ and $b'$ specified by the sampling process: $ s'_o \sim P_o(\cdot~|~b, a), b'=SE(b,s'_o,a)$. We define the following important quantity:
\begin{definition}[Cumulative Adversarial Counterfactual Error (C-ACoE)] Define C-ACoE as 
    \begin{align}\label{eq:c-acoe}
    \delta(s_o,b) = R(s_o, \pi(I)) - R(b, \pi(I)) + \gamma E_{s'_o,b' \sim T_o(\cdot,\cdot~|~b,\pi(I))} [\delta(s'_o,b')]
\end{align}
\end{definition}

\begin{theorem} \label{thm1}
Let $K = \max_{s \in \mathcal{S}} {V(s)}$ and assume $TV (T(\cdot |s_o, a), P_o(\cdot~|~b,a)) \leq \Xi$ for any observed state $s_o$, belief $b$, and action $a$ in the same time step, then
$$
\big|V(s_o) - U(b) - \delta(s_o,b)\big|  \leq   \frac{\gamma K\Xi}{1 - \gamma} 
$$
\end{theorem}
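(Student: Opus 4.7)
The plan is to bound $\Delta(s_o,b) := V(s_o) - U(b) - \delta(s_o,b)$ uniformly, by deriving a self-referential inequality of the form $\|\Delta\|_\infty \le \gamma K \Xi + \gamma \|\Delta\|_\infty$, and then solving for $\|\Delta\|_\infty$. The boundedness needed to make this well-defined is clear: since $R \in [0,1]$ and $\gamma < 1$, each of $V$, $U$, and $\delta$ is bounded, so $\|\Delta\|_\infty$ is finite a priori.

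First, I would unfold the three recursions for $V(s_o)$, $U(b)$, and $\delta(s_o,b)$ along a common policy $\pi$ with shorthand $a = \pi(I)$. The immediate-reward gap $R(s_o,a) - R(b,a)$ that appears in $V - U$ is exactly the reward term in $\delta$, so these cancel, leaving
\begin{align*}
\Delta(s_o,b) = \gamma \Big( E_{s'_o \sim T(\cdot|s_o,a)}[V(s'_o)] - E_{s'_o \sim P_o(\cdot|b,a)}\big[U(SE(b,s'_o,a)) + \delta(s'_o,SE(b,s'_o,a))\big] \Big).
\end{align*}

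Next, I would add and subtract $E_{s'_o \sim P_o(\cdot|b,a)}[V(s'_o)]$ to split the bracket into two pieces. Piece (i) is the difference $E_{T(\cdot|s_o,a)}[V] - E_{P_o(\cdot|b,a)}[V]$ of expectations of the bounded function $V \in [0,K]$ under two laws; by the standard total-variation bound $|E_\mu[f] - E_\nu[f]| \le (\sup f - \inf f) \, TV(\mu,\nu)$ and the hypothesis $TV(T(\cdot|s_o,a), P_o(\cdot|b,a)) \le \Xi$, this piece is at most $K\Xi$ in absolute value. Piece (ii) is $E_{s'_o \sim P_o(\cdot|b,a)}[V(s'_o) - U(SE(b,s'_o,a)) - \delta(s'_o,SE(b,s'_o,a))]$, which is exactly $E_{(s'_o,b') \sim T_o(\cdot,\cdot|b,a)}[\Delta(s'_o,b')]$ by definition of $T_o$.

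Combining the two pieces and applying the triangle inequality yields $|\Delta(s_o,b)| \le \gamma K \Xi + \gamma \, E_{T_o}[|\Delta(s'_o,b')|]$, and taking the supremum over all $(s_o, b, I)$ gives $M \le \gamma K \Xi + \gamma M$, i.e., $M \le \gamma K \Xi / (1 - \gamma)$, which is the claim. The one subtlety that will need care is that $V$, $U$, and $\delta$ all depend implicitly on the information state $I$ through $\pi(I)$, so the supremum must be taken uniformly over $I$ as well; however, since every step of the argument (including the TV bound and the recursive substitution) is invariant in $I$, this causes no difficulty. The rest is routine algebra.
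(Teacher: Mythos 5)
Your proposal is correct and follows essentially the same route as the paper's proof: both hinge on adding and subtracting $E_{s'_o \sim P_o(\cdot\,|\,b,a)}[V(s'_o)]$, bounding the resulting $E_{T}[V]-E_{P_o}[V]$ term by $K\Xi$ via the total-variation inequality, and recursing. The only cosmetic difference is that you close the argument with a sup-norm fixed-point inequality $M \le \gamma K\Xi + \gamma M$ on $\Delta = V-U-\delta$, whereas the paper unrolls the recursion explicitly and identifies the accumulated discounted reward gaps with $\delta(s_o,b)$; both yield the same geometric-series bound.
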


The above result shows that there are two parts to ACoE, the uncontrollable part with the $TV$ distance captures structural differences in the transition without attack and transition induced by the attack, while the controllable part, C-ACoE term $\delta(s_o,b)$ captures long term return difference due to the adversarially induced transition.  In the appendix, we delve more into the structural difference in transitions by utilizing Wasserstein distance instead of Total Variation (TV) distance. The above results also suggest that, apart from the inherent structural differences, minimizing C-ACoE $\delta(s_o,b) $ can be effective in ensuring that returns in the adversarial scenario are close to the non-adversarial scenario, which we explore in the next section.

Since the structural differences in transition are not controllable by the defender agent, we focus on minimizing the C-ACoE for the defender.  Furthermore, to ensure that the effectiveness of the policy in accumulating rewards is high, we minimize C-ACoE while maximizing the non-adversarial expected reward.

%The above results also suggests that apart from the inherent structural properties needed, minimizing C-ACoE $\delta(s_o,b) $ can be effective in ensuring that returns in the adversarial scenario are close to the non-adversarial scenario, which we explore in the next section.

\section{Optimizing C-ACoE along with Non-adversarial Expected Reward in Adversarial RL}
In RL settings, we do not have the model, and hence the transition dynamics $T$ are unavailable. Thus, computing $\delta(s_o,b) $ exactly is not possible, as the belief depends on knowledge of transition probabilities. However, our problem presents a structured scenario where the observation depends only on the current true state, and uncertainty is entirely due to adversarial perturbation. It has been stated in literature and is also intuitive that adversarial perturbations are effective in causing harm when they induce a large enough change in the defender's action distribution~\citep{oikarinen2021robust,zhangsadqn}. Thus, we propose to derive a surrogate belief based on the observed state $s_o$ in conjunction with reasoning about how the adversary might have forced this observation to arise. We present a couple of such belief constructions here. 

Using the full history of observations and actions (represented as the information state, $I$) as an input to the policy is computationally expensive to implement. Prior approaches have used a variety of approximations~\citep{azizzadenesheli2018policy}; we adopt a simple measure~\citep{muller2021geometry,kober2013reinforcement} where we restrict solutions to the set of policies that depend just on the current observation.
Next, note that if $b$ depends on $s_o$ only, then $\delta(s_0,b)$ is a function of $s_o$ only. Hence, we redefine the C-ACoE as
\begin{align} \label{eq:delta}
    \delta(s_o) = R(s_o, \pi(s_o)) - R(b (s_o), \pi(s_o)) + \gamma E_{s'_o \sim \nu(s'), s' \sim T(\cdot~|~s,\pi(s_o))} [\delta(s'_{o})]
\end{align}

We note that the underlying true state $ s'$ is not observed, but estimating the second term on the RHS above requires only samples of observation $s'_o$, which are available from the simulator. In this form, C-ACoE also satisfies the Bellman optimality structure (as stated formally in the following proposition) and hence allows for incorporating the minimization of $\delta(s_o)$ in standard RL techniques.

\begin{proposition} \label{prop:bellman}
    Let $\delta^*(s_o)$ be the minimum C-ACoE value from observation $s_o$. Then, 
    $$
    \delta^*(s_o) = \min_a \{ R(s_o, a) - R(b (s_o), a) + \gamma E_{s'_o \sim \nu(s'), s' \sim T(\cdot~|~s,a)} [\delta^*(s'_{o})] \}
    $$
\end{proposition}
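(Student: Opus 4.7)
The plan is to recognize that, once policies are restricted to depend only on the current observation $s_o$ (the restriction imposed just before the proposition statement), minimizing the C-ACoE defined in equation (\ref{eq:delta}) is exactly a standard discounted, stationary, infinite-horizon cost-minimization MDP, so the claim collapses to the Bellman optimality equation for that auxiliary MDP.

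First, I would identify the auxiliary MDP explicitly. Its state space is $\mathcal{S}$ (with generic state written $s_o$ to remember that in our setting it plays the role of an observation), its action space is $\mathcal{A}$, its one-step cost is
$$
c(s_o, a) := R(s_o, a) - R(b(s_o), a),
$$
which is bounded in $[-1, 1]$ since $R \in [0,1]$, and its transition kernel is
$$
\tilde{T}(s'_o \mid s_o, a) := \sum_{s'} P^{\nu}_o(s'_o \mid s') \sum_{s} T(s' \mid s, a)\, b(s_o)(s),
$$
which is precisely the marginal that the expectation in (\ref{eq:delta}) integrates against. With this identification, (\ref{eq:delta}) is the usual Bellman expectation equation for the cost-to-go of a stationary policy $\pi$,
$$
\delta^{\pi}(s_o) = c(s_o, \pi(s_o)) + \gamma \sum_{s'_o} \tilde{T}(s'_o \mid s_o, \pi(s_o))\, \delta^{\pi}(s'_o),
$$
and $\delta^*(s_o)$ in the proposition is, by definition, $\min_{\pi} \delta^{\pi}(s_o)$ taken over stationary observation-only policies.

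Second, I would introduce the Bellman optimality operator $\mathcal{B}$ acting on bounded functions $f:\mathcal{S} \to \mathbb{R}$ by
$$
(\mathcal{B} f)(s_o) = \min_{a \in \mathcal{A}} \Big\{ c(s_o, a) + \gamma \sum_{s'_o} \tilde{T}(s'_o \mid s_o, a)\, f(s'_o) \Big\},
$$
and observe that the proposition is precisely the fixed-point identity $\delta^* = \mathcal{B} \delta^*$. The two standard ingredients are: (i) $\mathcal{B}$ is a $\gamma$-contraction in the sup norm, which follows from the elementary inequality $|\min_a f(a) - \min_a g(a)| \leq \max_a |f(a) - g(a)|$ combined with the discount $\gamma < 1$; and (ii) the unique fixed point of $\mathcal{B}$ coincides with $\delta^*$, verified in the usual way by showing that a deterministic greedy policy with respect to the fixed point attains it as its $\delta^{\pi}$, and conversely that every $\delta^{\pi}$ dominates the fixed point by unrolling the Bellman inequality $\delta^{\pi} \geq \mathcal{B} \delta^{\pi}$ and taking $k \to \infty$.

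The main technical point, really just bookkeeping, is ensuring the $\min_a$ is attained so that a deterministic greedy stationary policy exists; since the analysis sections of the paper assume discrete states and actions, this is immediate. One subtlety worth flagging explicitly is that $\delta^*$ in the proposition must be read as the infimum over the restricted class of observation-only stationary policies introduced just above the statement, not over all history-dependent policies; the argument above applies within this restricted class without modification. Beyond the reduction to the auxiliary MDP, no machinery outside of standard discounted-MDP theory is needed, so the proof is short once the reduction is made explicit.
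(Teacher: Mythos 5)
Your proposal is correct and follows essentially the same route as the paper: the paper's proof likewise reduces the claim to viewing C-ACoE as an infinite-horizon MDP with observations $s_o$ as states, immediate cost $R(s_o,a)-R(b(s_o),a)$, and the transition $s'_o \sim \nu(s'), s' \sim T(\cdot\,|\,s,a)$, then invokes standard Bellman optimality. You simply make explicit the contraction-mapping and greedy-policy details that the paper leaves implicit.
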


Algorithm~\ref{alg:a3b-ppo} shows our adaptation of PPO for optimizing $\delta$ along with maximizing $V$. The steps for maximizing $V$ follow standard steps in PPO, leading to the standard advantage $\hat{A}_t$ in line 7. We also compute the C-ACoE-to-go from the sampled trajectories (line 5) and use it to augment the standard advantage $\hat{A}_t$ in line 7 (we need to minimize C-ACoE, hence the negative sign before $\hat{\delta}_t$). Line 9 is a standard PPO step to update the  $V$ network, and we do so similarly for the $\delta$ network in line 10. We found that computing an advantage-like term for $\delta$ did not improve performance, thu,s we used only C-ACoE-to-go. A similar adaptation is also done for DQN, presented in the appendix. Next, we describe two possible belief constructions given the observed state $s_o$.

\IncMargin{1.0em}
\begin{algorithm}[t]
%\SetAlCapHSkip{.1em}
\caption{$\delta$-PPO}
\label{alg:a3b-ppo}
Initialize policy network weights $\theta_1$, value network weights $\phi_1$, and $\delta$-network weights $\psi_1$\\
Set robustness-hyperparameter $\lambda$\\
\For{iteration $k \in \{1, \ldots, M\}$}{
Collect set of trajectories $\mathcal{D}_k$ by running policy $\pi_{\theta_k}$ multiple times for $T$ steps\\
Estimate rewards-to-go $\hat{R}_t$ and C-ACoE-to-go $\hat{\delta}_t$ at all time steps $t$ for all trajectory in $\mathcal{D}_k$\\
Compute advantage estimates $\hat{A}$ using Generalized Advantage Estimator~\citep{Schulmanetal_ICLR2016}, based on $\hat{R}_t$'s and $V_{\phi_k}$\\
Compute C-ACoE Advantage $A_{c,t} = \hat{A}_t - \lambda \hat{\delta}_t$ \\
Update policy parameters to $\theta_{k+1 }$ by maximizing the PPO-clipped~\citep{ppo} form of $A_{c,t}$\\
Update $\phi_{k+1} = \argmin_{\phi} \frac{1}{|\mathcal{D}_k|T} \sum_{\tau \in D_k} \sum_{t=0}^T (V_\phi (s_t) - \hat{R}_t)^2$\\
Update $\psi_{k+1} = \argmin_{\psi} \frac{1}{|\mathcal{D}_k|T} \sum_{\tau \in D_k} \sum_{t=0}^T (\delta_\psi (s_t) - \hat{\delta}_t)^2$\\
}
\end{algorithm}

\textbf{Adversary-Aware Belief Estimation (A2B)}:
We aim to assign a belief to states in neighborhood $N(s_o)$ of observation, $s_o$ where $N(s_o) = \{s~|~||s - s_o|| \leq \epsilon\}$. $N(s_o)$ is restricted to an $\epsilon$ bound given established adversarial perturbation practices. We know that an adversarial perturbation from state $s$ to state $s_o$ is an effective attack when the action distribution $\pi(s)$ and $\pi(s_o)$ are quite different. Based on this fact, we form a belief:
$$b(s) = \frac{e^{D_{KL}(\pi(s)||\pi(s_o))}}{\sum_{s' \in N(s_o)} e^{D_{KL}{(\pi(s')||\pi(s_o))}}}$$

\textbf{Adversary-Attack-Aware Belief Estimation (A3B)}:
Different from A2B, we assign scores to states in $N(s_o)$ based on assumptions about adversarial preference. These scores depend on a surrogate attack $\nu$, for which we use a 50-step PGD attack; quick empirical checks show this to find the worst-case bound of the $L_\infty$-norm ball in nearly every state. We assign a score $z(s)$ to a state $s \in N(s_o)$ that is a ratio of: (the KL divergence of the action distributions at possibly perturbed observation $s_o$ and the state $s$) to (the KL divergence of actions distribution at $\nu(s)$ and $s$). Then, a belief is assigned to state $s'$ depending on the score $z$ by a softmax operation:
$$
b(s) = \frac{e^{z(s)}}{\sum_{s' \in N(s_o)} e^{z(s')}} \; \mbox{ where } \; z(s) = \frac{D_{KL}(\pi(s_o)||\pi(s))}{D_{KL}(\pi(\nu(s))||\pi(s))} 
$$

The intuition for the above formulation of score $z$ is that if the true state was $s$, the adversary should prefer to provide $\nu(s)$ with a high KL divergence between action distributions at $\nu(s)$ and $s$, but since we observed $s_o$, the ratio of KL divergences in score $z(s)$ measures how effective the change $s$ to $s_o$ is, compared to the change $s$ to $\nu(s)$. Any candidate true state $s$ has a low score if $s_o$ is \emph{not} an effective attack from state $s$. Thus, A3B reduces the scores (weights) of states that are unlikely adversarial choices based on the policy $\pi$. Then, optimizing C-ACoE using A3B beliefs coupled with non-adversarial value maximization allows balancing unperturbed performance with robustness, as highlighted earlier in the introduction.

%{\color{blue}In using KL divergence as an indicator for adversarial preference, we intuit that an adversary's goal can be generalized as a change in the victim's behavior. To this end, our minimization of C-ACoE using A3B beliefs then acts robustly against this adversarial goal.}
\begin{wrapfigure}{r}{0.46\linewidth}
\centering
    \includegraphics[width=\linewidth]{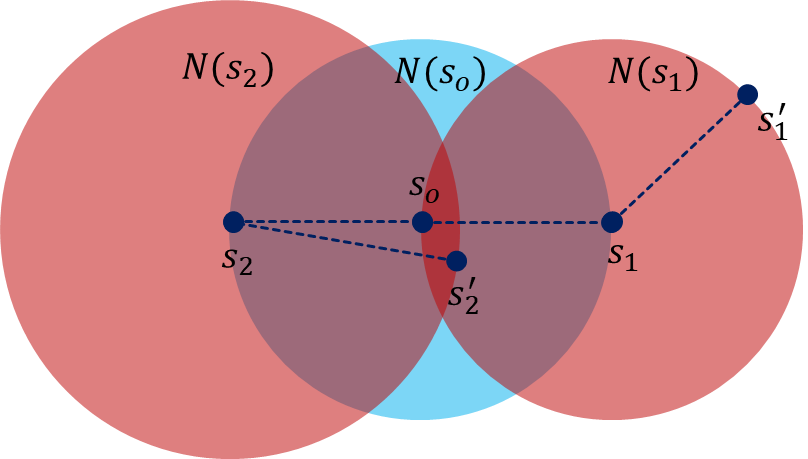}
    \caption{A3B belief construction. Let the dotted line $\overline{s_is_j}$ have magnitude representing the damage when perturbing $s_i\rightarrow s_j$. In this example, our method should discount the possibility that $\nu(s_2)=s_0$, and lessen the score $z(s_2)$.}
    \label{fig:neighborhood_uneven}
\end{wrapfigure}

 For a visual explanation of the logic of A3B, consider Figure~\ref{fig:neighborhood_uneven}. This figure shows two neighborhood states $s_1$ and $s_2$, which could potentially be the underlying true state, given the observed state $s_o$. Subsequently, $N(s_1)$ contains a worst-PGD perturbation $s'_1 = \nu(s_1)$ and $N(s_2)$ similarly contains $s'_2 = \nu(s_1)$. Even though $s'_2$ may be close in Euclidean distance to $s_o$, it is possible that $$D_{KL}(\pi(s'_2)||\pi(s_2))~>\!>~D_{KL}(\pi(s_o)||\pi(s_2))$$ leading to a small score $z_{s_2}$ (closer to 0) for $s_2$. This is intuitive, as an adversary will likely not perturb $s_2$ to $s_o$, due to the existence of the more disruptive attack $s'_2$. Similarly, the score $z_{s_1}$ for $s_1$ can be close to 1 due to $D_{KL}(\pi(s'_1)||\pi(s_1))\approx D_{KL}(\pi(s_o)||\pi(s_1))$, which is intuitive as $s_o$ results in same amount of change in action distribution as $s'_1$.

%\textbf{\textit{A3B reduces the decision weight of states that are unlikely adversarial choices based on the policy. This allows for ACoE to balance unperturbed performance with robustness, as highlighted earlier in the introduction. }}

{\color{black}\textbf{Continuous State Sampling}:} One issue to consider above is when the state space is continuous. In such a scenario, we still form a finite set $N(s_o)$ by uniformly sampling a given number $n$ (hyperparameter) of samples from the continuous set $C = \{s~|~||s - s_o|| \leq \epsilon\}$. From the definition of $\delta$ (Eq.~\ref{eq:delta}), we use $b$ to estimate $R(b, a)$. Our true value of this is $R = R(b, a) = \int_{s \in C} R(s, a) p(s) ds$ where the probability density $p(s) = e^{z(s)}/\int_{s \in C}e^{z(s)} ds$. In contrast, we sample $n$ states from a uniform distribution $U$ with probability density given by $u(s) = 1/vol(C)$ where $vol$ is the volume of set $C$ and estimate $\hat{R} = \frac{\sum_{s' \in N(s_o)} R(s', a)e^{z(s')}}{\sum_{s' \in N(s_o)} e^{z(s')}}$. We show a result in the appendix that justifies the estimate $\hat{R}$ by showing that the expected value of this estimate is close to the true required value $R$.

\textbf{Recurrent State History}: A3B is primarily proposed as an adversary-aware method of deriving beliefs about true states based on the current observation.
%, while C-ACoE is specifically designed to be scalable. 
However, this approach can be adapted to consider a history of observations, albeit with a higher computational burden. We provide an extended A3B definition with multistep observations and additional evaluations of this extended A3B using an LSTM network in the Appendix.

\section{Experiments}
We provide empirical evidence to show the effectiveness of our proposed method. In particular, we want to investigate whether A2B and A3B improve over leading adversarial robustness methods on established baselines, and what aspects of C-ACoE contribute to a viable defense against strategic adversaries.

\begin{table}[t]
\setlength{\tabcolsep}{4pt}
    \caption{Experimental results versus myopic adversaries. Each row shows the mean scores of each RL method against different attacks. The most robust scores are in \textbf{bold}. Our approaches are A2B and A3B, which are \colorbox[HTML]{b1f4fc}{highlighted}.}
\centering
\label{tab:results-highway}
\begin{tabular}{ lrrrrrr }
 \toprule
 Method & Unperturbed & MAD & PGD& Unperturbed &MAD  & PGD \\
 \bottomrule
 &\multicolumn{3}{c}{highway-fast-v0} & \multicolumn{3}{c}{merge-v0}\\
 \cmidrule(lr){2-4}\cmidrule(lr){5-7}
 PPO &24.8$\pm$5.42         & 13.63$\pm$19.85   &15.21$\pm$16.1 &14.94$\pm$0.01       &10.2$\pm$0.02   &10.42$\pm$0.95\\
  CARRL & 24.4$\pm$1.10       & 4.86$\pm$15.4     & 12.43$\pm$3.4 & 12.6$\pm$0.01     & 12.6$\pm$0.01     & 12.02$\pm$0.01\\
 RADIAL& {28.55$\pm$0.01}  &2.42$\pm$1.3 &14.97$\pm$3.1 & {14.86$\pm$0.01} & 11.29$\pm$0.01   & 11.04$\pm$0.91 \\
 WocaR  & 21.49$\pm$0.01       & 6.15$\pm$0.3      & 6.19$\pm$0.4 & {14.91$\pm$0.04} & 12.01$\pm$0.28 & 11.71$\pm$0.21\\
 RAD & 21.01$\pm$0.01   & 20.59$\pm$4.1 &{20.02$\pm$0.01} & 13.91$\pm$0.01 &  {13.90$\pm$0.01} &11.72$\pm$0.01 \\
  \rowcolor[HTML]{b1f4fc} A2B & 24.8$\pm$0.01 & {23.11$\pm$0.01} & 20.8$\pm$12.6 & 14.91$\pm$0.01 & {14.23$\pm$0.8} & 12.92$\pm$0.13\\
 \rowcolor[HTML]{b1f4fc} A3B & 23.8$\pm$0.01   & \textbf{23.21$\pm$0.01} &\textbf{22.61$\pm$14.1} & {14.91$\pm$0.17}  &\textbf{14.88$\pm$0.17}   &\textbf{14.89$\pm$0.17}\\
\bottomrule
\end{tabular}
\end{table}

\subsection{Experiment setup}
We evaluate C-ACoE methods on the standard Atari \citep{Bellemare_2013:Atari} and MuJoCo \citep{todorov2012mujoco} domains, and additionally the Highway simulators~\citep{highway-env}, to demonstrate real problems of interest. In the Mujoco and Highway tasks, the agent earns a score by traversing distance without incurring critical collisions. Atari tasks are game-dependent. We use a standard training setup seen in \citep{oikarinen2021robust,liang2022efficient,belaire-ccer}, and detailed in Appendix C. 

We compare C-ACoE optimization methods (A2B, A3B) to the following baselines: PPO \citep{ppo}; CARRL, a simple but robust minimax method \citep{everett:carrl}; RADIAL, a leading regularization approach \citep{oikarinen2021robust}; WocaR, worst-case aware value maximization \citep{liang2022efficient}; RAD, a method minimizing a notion of regret~\citep{belaire-ccer}; and Protected~\citep{liu-protected}. We test all methods against two greedy attack approaches of reward-minimizing policy adversaries and gradient attacks. 
%Additionally, we evaluate against PA-AD \citep{sun2023strongest}, the leading multi-step strategic attack, on MuJoCo tasks. 
We evaluate each method's PPO implementation in the Highway and Mujoco domains, and DQN implementations in Atari tasks. Additional comparisons to a few more baselines, namely BCL~\citep{Wu:BCL} and CAR-DQN~\citep{li:optimal}, are in the Appendix.

\textbf{Protected Baseline}: We wish to specifically address the comparison with Protected~\citep{liu-protected}. Protect does regret minimization (EXP3) over multiple rounds (each round is a full policy episode), and the weights are updated \textit{at test time} based on empirical return in each round.
As stated earlier, this has a major advantage against all other approaches in the literature, which do not do any test time adaptation, and unfortunately, make Protected impractical for safe RL applications. 
To indicate this, the results of the original Protected are presented but grayed out (and not compared to when highlighting the best result) in Table~\ref{tab:results-mujoco}. The test time adaptation also results in Protected having a significantly higher unperturbed score in some of the domains (e.g., HalfCheetah, Walker2d, Ant) even when compared to PPO. Therefore, for a \emph{fair comparison} to all the adversarial RL approaches, we also provide a comparison against a variant of Protected, referred to as Protected$^\dagger$, where there is no test time adaptation. Further details of Protected and additional comparisons are presented inthe  Appendix.

\textbf{Myopic Adversaries}:
We test the adversarial robustness of each method against adversaries that we term as ``greedy'' or myopic, meaning that they compute worst-case attacks for a given time step. Following the setup employed in existing works, we measure a 10-step PGD attack \citep{madry2018:adv} with $\epsilon=0.1$, and a MAD attack \citep{zhangsadqn} with $\epsilon=0.15$.{We evaluate both MAD and PGD attacks as they represent two distinct attack directions (MAD is reward-based, while PGD is a gradient-based).}

\textbf{Long-Horizon Adversaries}:
We also assess adversarial robustness of each method versus more strategic, long-horizon adversaries that compute worst-case \textit{trajectories} to deceive an RL agent. {We evaluate agents against PA-AD \citep{sun2023strongest}, the state-of-the-art adversarially-directed policy attack,} as well as the Critical Point Attack \citep{liang2022efficient} and Strategically Timed Attack \citep{lin:tactics2017}. We evaluate the adversarial robustness of the target policies as the depth of strategy increases for the long-horizon adversaries. In the context of the Critical Point attack, a higher depth of strategy increases the length and number of trajectories sampled to find the worst-case future outcome, and a stronger Strategically Timed attacker has a larger perturbation budget. 

%Full Results Table

\begin{table}[t]
\setlength{\tabcolsep}{4pt}
    \caption{Experimental results versus myopic adversaries in Atari domains, formatted the same as Table \ref{tab:results-highway}. Methods are evaluated as their corresponding DQN implementations.}
\centering
\label{tab:results-atari}
\begin{tabular}{ lrrrrrr }
 \toprule
 Method & Unperturbed & MAD & PGD& Unperturbed &MAD  & PGD \\
 \bottomrule
& \multicolumn{3}{c}{Pong} & \multicolumn{3}{c}{Freeway} \\
  \cmidrule(lr){2-4}\cmidrule(lr){5-7}
 %DQN &  {21.0$\pm$0} &  -21.0$\pm$0.0                & -21.0$\pm$0.0\\
 PPO &{21.0$\pm$0}   & -20.0$\pm 0.07$   &-19.0$\pm$1.0 &29 $\pm$ 3.0       & 4 $\pm$ 2.31   &2$\pm$2.0 \\
 CARRL &  13.0 $\pm$1.2&    11.0$\pm$0.010             & 6.0$\pm$1.2 &    18.5$\pm$0.0   &   19.1 $\pm$1.20   &  15.4$\pm$0.22\\
 RADIAL& {21.0$\pm$0}&11.0$\pm$2.9 &\textbf{21.0$\pm$ 0.01} & 33.2$\pm$0.19  &29.0$\pm$1.1 &24.0$\pm$0.10 \\
 WocaR  & {21.0$\pm$0} &   18.7 $\pm$0.10  & 20.0 $\pm$ 0.21 & 31.2$\pm$0.41       & 19.8$\pm$3.81   & 28.1$\pm$3.24\\
 RAD & {21.0$\pm$0}&  14.0 $\pm$ 0.04   & 14.0 $\pm$ 2.40 &  33.2$\pm$0.18&  {30.0$\pm$0.23} & 27.7$\pm$1.51 \\
 \rowcolor[HTML]{b1f4fc} A2B & {21.0$\pm$0}& { 20.1$\pm$0.04}&\textbf{21.0$\pm$0.01} &  33.2$\pm$0.18&  {30.1$\pm$0.43} & \textbf{30.8$\pm$1.51}\\
  \rowcolor[HTML]{b1f4fc} A3B & 21.0$\pm$0 & \textbf{20.8$\pm$0.7} & \textbf{21.0$\pm$0.01} &  33.2$\pm$0.18 &  \textbf{31.0$\pm$0.87} & \textbf{31.1$\pm$1} \\
 \bottomrule
 
\end{tabular}
\end{table}

\begin{table}
\setlength{\tabcolsep}{4pt}
    \caption{Experimental results versus myopic adversaries in Mujoco domains, formatted the same as Table \ref{tab:results-highway}. Methods are evaluated as their corresponding PPO implementations. Note: the Protected method requires test time adaptation rounds to achieve full results. The Protected method without test time adaptation is labelled as \textit{Protected\textsuperscript{$\dagger$}.}}
\centering
\label{tab:results-mujoco}
 \begin{tabular}{ lrrrrrr }
 \toprule
 Method & Unperturbed & MAD & PGD& Unperturbed &MAD  & PGD \\
 \bottomrule
&\multicolumn{3}{c}{Hopper} & \multicolumn{3}{c}{Walker2d}\\
  \cmidrule(lr){2-4}\cmidrule(lr){5-7}
 PPO &4128 $\pm$ 56        & 1110$\pm$32   &128$\pm$105 &5002 $\pm$ 20  & 680$\pm$1570          &730$\pm$262 \\
 RADIAL& {3737$\pm$75}  &2401$\pm$13 & 3070$\pm$31 & {5251$\pm$10} &3895$\pm$128           &3480$\pm$3.1 \\
 WocaR  & 3136$\pm$463       &  1510 $\pm$ 519   & 2647 $\pm$310 & 4594$\pm$974  & {3928$\pm$1305}       & 3944$\pm$508 \\
 {\color{gray!70}Protected} & {\color{gray!70}3652$\pm$108} & {\color{gray!70}2512$\pm$392} & {\color{gray!70}2221$\pm$ 775} & {\color{gray!70}6319$\pm$31} & {\color{gray!70}5148$\pm$1416} & {\color{gray!70}4720$\pm$ 1508}  \\ 
  Protected$^\dagger$ & 3573$\pm$81 & 2398$\pm$665&  2215$\pm$98  &  5019 $\pm$ 87 & 3887 $\pm$ 492 & 3613 $\pm$ 487  \\ 
 RAD & 3473$\pm$23   & {2783$\pm$325}     &{3110$\pm$30} & 4743$\pm$78& 3922$\pm$426    & {4136$\pm$639} \\
 \rowcolor[HTML]{b1f4fc} A2B & 3710$\pm$11 & 3240$\pm$41 & 3299$\pm$28 & 4760$\pm$61 & 4636$\pm$87 & 4708$\pm$184 \\%\rowcolor[HTML]{b1f4fc}
 \rowcolor[HTML]{b1f4fc} A3B   & 3766$\pm$23   & \textbf{3370$\pm$275}     &\textbf{3465$\pm$17} &  5341$\pm$60  & \textbf{5025$\pm$94} &  \textbf{5292$\pm$231}\\
 %Soap-CCER & 3685  $\pm$ 45 & 2320 $\pm$ 130 &  \textbf{3803$\pm$ 35}\\ 

 \toprule
 % \hline
 %Algorithm& Unperturbed & WC Policy & PGD $\epsilon\!\!=\!\frac{5}{255}$ \\
  &\multicolumn{3}{c}{HalfCheetah} & \multicolumn{3}{c}{Ant}\\
  \cmidrule(lr){2-4}\cmidrule(lr){5-7}

 PPO &{5794 $\pm$ 12}        & 1491$\pm$20   &-27$\pm$1288 & 5620$\pm$29 & 1288$\pm$491 & 1844$\pm$330\\
 RADIAL& 4724$\pm$76  &4008$\pm$450 &{3911$\pm$129} & 5841$\pm$34 & 3210$\pm$380 & 3821$\pm$121\\
 WocaR  & 5220$\pm$112     &  3530$\pm$458  & 3475$\pm$610& 5421$\pm$92 & 3520$\pm$155 & 4004$\pm$98 \\
 {\color{gray!70}Protected} & {\color{gray!70}7095$\pm$88} &  {\color{gray!70}4792$\pm$1480} & {\color{gray!70}4680$\pm$1203}  & {\color{gray!70}5769$\pm$290} & {\color{gray!70}4440$\pm$1053} & {\color{gray!70}4228$\pm$ 484} \\ 
  Protected$^\dagger$ & 4777$\pm$360 &  4551$\pm$843   & 3997$\pm$285 &  4620$\pm$32 & \textbf{4264$\pm$166} & 4368$\pm$473 \\
 RAD & 4426$\pm$54   & {4240$\pm$4} & {4022$\pm$851} & 4780$\pm$10 & 3647$\pm$32 & 3921$\pm$74 \\ 
 \rowcolor[HTML]{b1f4fc} A2B & 5192 $\pm$56 & 4855$\pm$ 120 & 4722$\pm$33 & 5511$\pm$13 & 3824$\pm$218 & 4102$\pm$315 \\
 \rowcolor[HTML]{b1f4fc} A3B    & 5538$\pm$20& \textbf{4986$\pm$41} & \textbf{5110$\pm$22}  & 5580$\pm$41 & {4071$\pm$242} & \textbf{4418$\pm$290}  \\
 \bottomrule
\end{tabular}
\end{table}

\begin{table}[t]
    \centering
        \caption{Robust performance against the PA-AD attacker \citep{sun2023strongest}. We train the attacker with the PA-AD framework against the completed victim policies for 500 episodes, the same for each victim and environment. As the Protected method has several PA-AD attackers (for each non-dominated policy), we instead use the sampling schema outlined in their work.}
    \label{tab:paad-exp}
    \begin{tabular}{lrrrr}
    \toprule
    \multicolumn{5}{c}{PA-AD Perturbed Scores}\\
    Method & HalfCheetah & Walker2d & Hopper & Ant \\
    \midrule
    PPO & -388 $\pm$ 820 & 427 $\pm$ 32 & 167 $\pm$ 93 & -121 $\pm$ 1255 \\
    Radial &3441 $\pm$ 42& 3703 $\pm$ 202 &  2288 $\pm$ 74 &2567 $\pm$ 41 \\
    Wocar &  4148 $\pm$ 68 &3895 $\pm$ 126 &2387 $\pm$ 114 & 2779 $\pm$ 170  \\
    \color{gray!70}Protected & \color{gray!70}4411$\pm$718 &\color{gray!70} 5803$\pm$857&\color{gray!70} 2896$\pm$723& \color{gray!70}4312 $\pm$281\\
    Protected$^\dagger$ &  2331 $\pm$ 277 &4480 $\pm$ 492 & 2210 $\pm$385 & 3103$\pm$96  \\
    RAD & 4233 $\pm$ 13 & 3864 $\pm$ 67 &2403 $\pm$ 129 &2756 $\pm$ 81 \\
    \rowcolor[HTML]{b1f4fc} A2B &  4393 $\pm$ 79 &  3997 $\pm$ 214 &2441 $\pm$ 31 &  2821 $\pm$ 312  \\
    \rowcolor[HTML]{b1f4fc} A3B &  \textbf{4478 $\pm$ 67 }  &  \textbf{  4931 $\pm$ 166}&  \textbf{ 2580 $\pm$ 92}&  \textbf{3205 $\pm$ 275}\\
    \bottomrule
    \end{tabular}
\end{table}

\subsection{Results}
In Tables~\ref{tab:results-highway}, ~\ref{tab:results-atari}, and ~\ref{tab:results-mujoco}, we report the mean result over 5 policies initialized with random seeds, with 50 test episodes each. The variance reported ($\pm \sigma$) is the standard deviation from the mean for each method. The most robust score is shown in \textbf{boldface}. 

\noindent\textbf{Myopic attacks}: As seen in Table~\ref{tab:results-highway}-\ref{tab:results-mujoco}, C-ACoE methods A2B and A3B achieve state-of-the-art robust performance against standard greedy attacker strategies, as well as nominal performance similar to the best observed value-maximizing methods such as PPO. We attribute this success to the two parts of ACoE: framing the adversarial robustness problem as a POMDP and the simultaneous maximization of value and \textit{minimization of ACoE error} brings increased performance over maximin methods and higher robustness overall. Our approaches perform better than Protected with test time adaptation and also Protected$^\dagger$ in all the cases, except Ant. 

\noindent\textbf{Long-horizon attacks}: We also test our methods against attackers with a longer planning horizon (and not only the myopic attackers from above). In Figure \ref{fig:results-strategic} and Table \ref{tab:paad-exp}, we test the performance of our approaches in the presence of the SOTA attack, referred to as the PA-AD policy attack \citep{sun2023strongest}. We also include experiments evaluating robust methods against the Strategically Timed attack \citep{lin:tactics2017} and the Critical Point attack\citep{Sun_Zhang_Xie_Ma_Zheng_Chen_Liu_2020} in the appendix. We find that across domains, C-ACoE agents maintain robustness even against long-horizon attacks. This is one of the main advantages of our proposed methods following the C-ACoE-minimizing philosophy, as the error-robust policies seek stable trajectories rather than robust single-step action distributions. %Additionally, we find that under strong macro-strategic attacks, methods with cumulative robustness measures (WocaR, RAD, A2B, A3B) outperform non-cumulative methods even when nominal or weakly-perturbed performance is similar. In Figure \ref{fig:results-strategic}, RADIAL is the only non-cumulative robustness method, and has the largest performance falloff as the attack strength increases.

\noindent\textbf{Robust Behavior}:
In Appendix Figure \ref{fig:cheetah-frames}, we observe qualitative differences between PPO, A3B, and WocaR. The WocaR agent adopts a more stable motion, minimizing the worst-case, and PPO optimizes for speed, only using the back leg. A3B balances the two approaches, using both legs to keep stability while still retaining a wide range of motion. Full videos of the behaviors described in Figure \ref{fig:cheetah-frames} can be viewed from DropBox at tinyurl.com/a3b-gif, where the extent of robust behavior can be better observed. 

\section{Discussion and Limitations}\label{discussion}
We introduce the novel concept of ACoE based on beliefs about the true state. We propose a scalable approximation of ACoE, C-ACoE, and demonstrate its usefulness in proactive adversarial defense, achieving state-of-the-art robustness against strong observation attacks from both greedy and strategic adversaries on a variety of benchmarks.
More importantly, we find that recognizing the partially observable nature of the defender agent in adversarial RL problems and optimizing ACoE can be used to increase the robustness of RL to adversarial observations, even against stronger or previously unseen attackers. In this paper, we focused on the estimation of belief states from single-step perturbed observations. It may be beneficial to further estimate belief based on observations over multiple time steps. Some preliminary results on this are in the appendix, and addressing the computational complexity of multistep observation-based belief construction makes for promising future work. 
%We argue this point as all existing robustness methods are vulnerable to sufficiently long-horizon attackers. Thus, considering multi-step observations can potentially help deal with long-horizon attackers. 
We also note that the efficacy of the belief construct that we use is reliant on the accuracy of using KL Divergence as a notion of attack strength. We find our measures to be empirically the strongest, compared to notions such as Euclidean state distance, other F-divergences, or minimum reward, however, and leave other more complex measures to future work.

%%%%%%%%%%%%%%%%%%%%%%%%%%%%%%%%%%%%%%%%%%%%%%%%%%%%%%%%%%%%%%%%%%%%%%%%
\newpage
\section*{Ethics Statement}
By trying to understand how to produce robust and safe RL policies, we unavoidably create knowledge on the destruction of prior policies. While this pursuit yields a net positive result by far, it is still important to acknowledge the risks associated with this field of research. In this paper specifically, we acknowledge the information asymmetry between the attacker and defender in the problem, as well as the insight that an adversary is, in general, considering attacks that change the victim's behavior to the greatest extent. These insights are formal definitions of existing dynamics, and while their acknowledgement may yield some tools to bad actors, we also provide formal and explicit tools to mitigate those harms.

\section*{Reproducibility}
We have uploaded code as part of our submission, showcasing the implementation of our ACoE-optimizing PPO methods, as well as the computation of A3B and A2B. Additionally, Algorithm \ref{alg:a3b-ppo} and \ref{alg:a3b-dqn} provide pseudocode-level instructions on the implementation of our methods. We have listed hyperparameter values and additional details in the appendix. All proofs in our paper are also present in the appendix.

\section*{Acknowledgments}
This research/project is supported by the National Research Foundation Singapore and DSO National Laboratories under the AI Singapore Programme (AISG Award No: AISG2-RP-2020-017) and the grant W911NF-24-1-0038 from the US Army Research Office.
%\newpage
\bibliography{example_paper}

\newpage
\appendix

\section{Proofs and Additional Theory Results}

\begin{proof}[Proof of Theorem~\ref{thm1}]

%Let the adversary modification be given by $\mu$. In the above $\mu(s_t) = z_t$. While the inverse may not exist as the reverse map may be a set of states, we write $\mu^{-1}(z_t) = s_t$ to indicate the true ground truth state which is observed by the adversary before it modifies $s_t$ to $z_t$. Also, we write $z' \sim T'(\cdot | z, \pi(z))$ to mean that probability distribution in which $z' = \mu(s'), s' \sim T(\cdot | s, \pi(z)), s = \mu^{-1}(z)$.
%Note that we can rewrite $V$ as
%\begin{align*}
%& V(z_t) = R(z_t, \pi(z_t)) + \gamma E_{z_{t+1} \sim T'(\cdot |z_t, \pi(z_t))} [V(z_{t+1})] + \gamma E_{z_{t+1} \sim T(\cdot |z_t, \pi(z_t))}[V(z_{t+1})] - \gamma E_{z_{t+1} \sim T'(\cdot |z_t, \pi(z_t))} [V(z_{t+1})]
%\end{align*}

%Note that since $U(b')$ (where $b'=SE(b,s_o,\pi(I))$) does not depend on $s'_o$, we get the following
%\begin{align*}
%E_{s'_o \sim T_o(\cdot~|~b,\pi)} [U(b')]  & = E_{s'_o \sim P_o(\cdot~|~b',\pi(I')), b'=SE(b,s_o,\pi(I)), s_o \sim P_o(\cdot~|~b,\pi(I))} [ U(b')] \\
%& = E_{b'=SE(b,s_o,\pi(I)), s_o \sim P_o(\cdot~|~b,\pi(I))} [U(b')] \\
%& = E_{s_o \sim P_o(\cdot~|~b,\pi(I))} [U(SE(b,s_o,\pi(I)))]
%\end{align*}

%Using the above in the recursive form for $U(b)$ and then 
Subtracting $U$ from $V$, and adding and subtracting $\gamma E_{s'_o \sim P_o(\cdot~|~b,\pi(I))} [V(s'_o)]$ we get
\begin{align*}
& V(s_o) - U(b) = \\
& \quad R(s_o, \pi(I)) - R(b, \pi(I)) + \gamma E_{s'_o \sim P_o(\cdot~|~b,\pi(I))} [V(s'_o) - U(b')] + \\
& \qquad \gamma E_{s'_o \sim T(\cdot |s_o, \pi(I))}[V(s'_o)] - \gamma E_{s'_o \sim P_o(\cdot~|~b,\pi(I))} [V(s'_o)]
\end{align*}
Note that by definition of $T_o$, we have that $E_{s'_o \sim P_o(\cdot~|~b,\pi(I))} [V(s'_o) - U(b')] = E_{s'_o,b'  \sim T_o(\cdot, \cdot~|~b,\pi(I))} [V(s'_o) - U(b')]$
%Assume
%\begin{enumerate}
%\item $V$ is Lipschitz with constant $K$.
%\item For any $s,s'$ with $||s - s'||_1 \leq \xi$ and any action $a$, $W_1 (T(\cdot |s, a), T(\cdot |s', a)) \leq \Xi$
%\item For any $z$, $W_1 (T(\cdot |z, \pi(z)), T'(\cdot |z, \pi(z))) \leq \epsilon$
%\end{enumerate}

%We rename the variable in the last term $\gamma E_{s_o \sim P_o(\cdot~|~b,\pi(I))} [V(s_o)]$ from $s_o$ to $s'$ to be able to compare it to $E_{s' \sim T(\cdot |s, \pi(I))}[V(s')]$.

Next, from Holder's inequality, we get that
\begin{align}\big| E_{s'_o \sim T(\cdot |s_o, \pi(I))}[V(s'_o)] - E_{s'_o \sim P_o(\cdot~|~b,\pi)} [V(s'_o)] \big | \leq \max_s\{V(s)\} TV (T(\cdot |s_o, \pi(I), P_o(\cdot~|~b,\pi(I))) \label{eq:holder}
\end{align}

Thus, for one side of the inequality above (i.e., using $a \leq b$ from the shown $|a| \leq b$, the other side is $-b \leq a$)
\begin{align*}
& V(s_o) - U(b) \leq \\
& \quad R(s_o, \pi(I)) - R(b, \pi(I)) + \gamma E_{s'_o,b' \sim T_o(\cdot,\cdot~|~b,\pi(I))} [V(s'_o) - U(b'] + \gamma K \Xi
\end{align*}
For notation simplicity, let $R(s, \pi(I)) - R(b, \pi(I)) = \delta_R (s,b)$. We use $I'$ as the updated information state obtained by concatenating $I$ with $\pi(I), s'_o$.
Applying the above recursively, we get
\begin{align*}
& V(s_o) - U(b) \\
& \quad \leq \delta_R (s_o,b) + + \gamma E_{s'_o,b' \sim T_o(\cdot,\cdot|b,\pi(I))} [V(s'_o) - U(b')] +  \gamma K \Xi \\
& \quad \leq  \delta_R (s_o,b)  + \gamma E_{s'_o,b' \sim T_o(\cdot,\cdot|b,\pi(I))} \big[\delta_R (s'_o,b')  + \gamma E_{s''_o,b'' \sim T_o(\cdot,\cdot|b',\pi(I'))} [V(s''_o) - U(b'')] + \gamma K \Xi \big ] +  \gamma K \Xi \\
& \quad \leq ...\\
& \quad \leq E_{(s_o, b, s'_o, b', \ldots ) \sim \pi, T, P_o } [\delta_R (s_o,b) + \gamma \delta_R (s'_o,b') + \gamma^2 \delta_R (s''_o,b'') + ...] + \frac{\gamma K\Xi}{1 - \gamma} 
\end{align*}

We note that $E_{(s_o, b, s'_o, b', \ldots) \sim \pi, T, P_o } [\delta_R (s_0,b) + \gamma \delta_R (s'_o,b') + \gamma^2 \delta_R (s''_o,b'') + ...] = \delta(s_o,b)$, where
$$
\delta(s_o,b) = R(s_o, \pi(I)) - R(b, \pi(I)) + \gamma E_{s'_o,b' \sim T_o(\cdot,\cdot~|~b,\pi(I))} [\delta(s'_o,b')]
$$
Thus, 
\begin{align*}
& V(s_o) - U(b)  \leq  \delta(s_o,b) + \frac{\gamma K\Xi}{1 - \gamma} 
\end{align*}
By symmetric argument using other side of Eq.~\ref{eq:holder}, we get 
\begin{align*}
& \delta(s_o,b) - \frac{\gamma K\Xi}{1 - \gamma}  \leq V(s_o) - U(b)  
\end{align*}
These last two equations led to the statement in the theorem.

\end{proof}
%\begin{align*}
%& \delta^{\pi, \mu}_{T'}(z_t) - \frac{K\epsilon}{1 - \gamma}  \leq V(z_t) - U(z_t)  \leq  \delta^{\pi, \mu}_{T'}(z_t) + \frac{K\epsilon}{1 - \gamma} 
%\end{align*}

The result above uses total variation distance (other work in literature also do~\citep{zhangsadqn}), but, total variation is not as informative a distance measure as Wasserstein distance. For example, it is easy to see that $TV(P,Q)= 1$ whenever the support of $P$ and $Q$ do not overlap, but it does not distinguish whether the non-overlapping supports are near or far apart. As shown in prior work on WGAN~\citep{Arjovsky:wgan}, Wasserstein distance provides more fine-grained distinctions. Also, the assumed bound $\Xi$ above hides the effect of the nature of the underlying transition $T$ on the bound. Hence, we prove the next result using Wasserstein distance, which reveals these facets of the problem.

\begin{theorem} \label{thm2}
Assume that 
(1) $V$ is $L$-Lipschitz
    and (2) for any $||s - s'||_{\infty} \leq \epsilon$ and any action $a$ we have $W_1 (T(\cdot |s, a), T(\cdot |s', a))) \leq \xi$.
Then,
$$
\big|V(s_o) - U(b) - \delta(s_o,b)\big|  \leq   \frac{\gamma L(\xi + \epsilon)}{1 - \gamma} 
$$    
\end{theorem}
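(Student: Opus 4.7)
The plan is to mirror the structure of the proof of Theorem~\ref{thm1}, replacing the total variation step (Eq.~\ref{eq:holder}) by a Wasserstein-based estimate that leverages the $L$-Lipschitz hypothesis on $V$ and the structural bound on $T$. First I would perform the same decomposition as before: subtracting $U(b)$ from $V(s_o)$ and adding/subtracting $\gamma E_{s'_o \sim P_o(\cdot | b, \pi(I))}[V(s'_o)]$ yields a recursion whose only non-$\delta_R$ term is the discrepancy $\bigl| E_{s'_o \sim T(\cdot | s_o, \pi(I))}[V(s'_o)] - E_{s'_o \sim P_o(\cdot | b, \pi(I))}[V(s'_o)] \bigr|$, which must now be bounded without appealing to $K \cdot TV$.

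For this discrepancy I would invoke Kantorovich--Rubinstein duality: because $V$ is $L$-Lipschitz, the gap is at most $L \cdot W_1\bigl(T(\cdot|s_o, \pi(I)),\, P_o(\cdot|b, \pi(I))\bigr)$. The main technical step is then bounding this Wasserstein distance by $\xi + \epsilon$ via a triangle inequality through the intermediate distribution $\mu(\cdot|b,a) := E_{s \sim b}[T(\cdot|s,a)]$. For the \emph{transition-gap} piece $W_1(T(\cdot|s_o,a), \mu(\cdot|b,a))$, convexity of $W_1$ in one argument reduces the problem to bounding $W_1(T(\cdot|s_o,a), T(\cdot|s,a))$ for each $s$ in the support of $b$; since beliefs are supported on $N(s_o)$ (so $\|s - s_o\|_\infty \leq \epsilon$), assumption (2) directly gives a bound of $\xi$. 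For the \emph{observation-gap} piece $W_1(\mu(\cdot|b,a), P_o(\cdot|b,a))$, the coupling $s' \mapsto \nu(s')$ has per-sample cost at most $\epsilon$ by the standard adversarial perturbation constraint, giving a bound of $\epsilon$.

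Combining these yields the per-step error bound $L(\xi + \epsilon)$. Unrolling the recursion exactly as in Theorem~\ref{thm1} then produces the $\delta(s_o,b)$ term (the discounted telescoping sum of $\delta_R$'s) plus a geometric series of per-step errors that sums to $\frac{\gamma L(\xi + \epsilon)}{1 - \gamma}$. Applying the symmetric direction of Kantorovich--Rubinstein yields the two-sided bound stated in the theorem.

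The main obstacle, I expect, is the bookkeeping in the Wasserstein coupling: one must compose a transition-gap coupling (between two distributions over next true states $s'$) with an observation-gap coupling (from $s'$ to $s'_o$), and verify that the composed coupling has expected cost at most $\xi + \epsilon$. This relies on the observation space coinciding with the state space so that distance is a single consistent notion throughout; this holds since $\mathcal{O} = \mathcal{S}$ in our setup, but deserves explicit mention. Apart from this Wasserstein step, the algebra is essentially a recasting of the TV-based proof of Theorem~\ref{thm1}.
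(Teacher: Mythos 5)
Your proposal is correct and follows essentially the same route as the paper's proof: the identical decomposition from Theorem~\ref{thm1}, Kantorovich--Rubinstein duality to replace the TV step with $L\,W_1$, a triangle inequality through the belief-averaged transition $\sum_s T(\cdot|s,a)b(s)$, convexity of $W_1$ plus assumption (2) for the $\xi$ piece, and the perturbation-bound coupling for the $\epsilon$ piece. No substantive differences.
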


\begin{proof}[Proof of Theorem~\ref{thm2}]
The overall proof follows the same structure as Theorem~\ref{thm1}. The only difference is in the bound for 
\begin{align}\big| E_{s'_o \sim T(\cdot |s_o, \pi(I))}[V(s'_o)] - E_{s'_o \sim P_o(\cdot~|~b,\pi(I))} [V(s'_o)] \big|
\end{align}

For a simpler presentation, we use $a,a'$ to denote action taken in current and next time step. As $V/L$ is 1-Lipschitz, by duality of $W_1$ Wasserstein distance, we have
$$
\big| E_{s'_o \sim T(\cdot |s_o, a)}[V(s'_o)/L] - E_{s'_o \sim P_o(\cdot~|~b,\pi(I))} [V(s'_o)/L] \big| \leq W_1(T(\cdot |s_o, a), P_o(\cdot~|~b,a))
$$
or multiplying by $L$
$$
\big| E_{s'_o \sim T(\cdot |s_o, a)}[V(s'_o)] - E_{s'_o \sim P_o(\cdot~|~b,\pi(I))} [V(s'_o)] \big| \leq LW_1(T(\cdot |s_o, a), P_o(\cdot~|~b,a))
$$

Next, we bound $W_1(T(\cdot |s_o, a)$. Note that $P_o(\cdot~|~b,a)) = \sum_{s'} P^{\nu}_o(\cdot~|~s') \sum_s T(s'~|~s,a)b(s)$.
First, because the restriction on adversarial perturbation, we know that if $b(s) > 0$ then $||s - s_o||_\infty \leq \epsilon$. 
%Let $s_g$ be the ground truth state and we know that $s_g \in \{s|b(s) > 0\}$. 
Then, based on our assumption
\begin{align}
W_1(T(\cdot|s_o,a), T(\cdot|s,a)) \leq \xi \mbox{ for any  } s \mbox{ such that } b(s) > 0 \label{eq:firststep}
\end{align}
First, note that $W_1$ is a convex function of its argument.
This can be seen easily; we show it for the first argument below. Recall that definition of $W_1(\mu, \nu)=\inf_{\gamma \in \Gamma(\mu, \nu)} \int d(x,y)\gamma(dx,dy)$ for couplings (joint distribution) set $\Gamma$ that have marginal as $\mu, \nu$. Choose $\gamma^*_1$ as a minimizer in $W_1(\mu_1, \nu)$ and $\gamma^*_2$ as a minimizer in $W_1(\mu_2, \nu)$. Let $\gamma^* = \alpha\gamma^*_1 + (1-\alpha)\gamma^*_2$; it easy to see that $\gamma^* \in \Gamma(\mu. \nu)$. Then,
\begin{align*}
W_1(\alpha\mu_1 + (1-\alpha)\mu_2, \nu) & =\inf_{\gamma \in \Gamma(\alpha\mu_1 + (1-\alpha)\mu_2, \nu)} \int d(x,y)d\gamma(x,y) \\
& \leq \int d(x,y)d\gamma^*(x,y) \\
& = \alpha \int d(x,y)d\gamma^*_1(x,y) + (1 - \alpha)\int d(x,y)d\gamma^*_2(x,y) \\
& = \alpha W_1(\mu_1 , \nu) + (1 - \alpha)W_1(\mu_2 , \nu)
\end{align*}

Let $T(\cdot~|~b,a) = \sum_s T(\cdot~|~s,a)b(s)$. Using the above convexity of $W_1$, we get that 
\begin{align}
W_1(T(\cdot|s_o,a), T(\cdot|b,a)) \leq \sum_s b(s) W_1(T(\cdot|s_o,a), T(\cdot|s,a)) \leq \xi 
\end{align}
where the last inequality follows from Eq.~\ref{eq:firststep}

Next, we bound $W_1(P_o(\cdot~|~b,a), T(\cdot~|~b,a))$. First, by definition of $T(\cdot~|~b,a)$ we get that $P_o(\cdot~|~b,a) = \sum_{s'} P^{\nu}_o(\cdot~|~s')T(s'~|~b,a)$. Consider the joint distribution $\gamma^*$ over the space $\mathcal{S} \times \mathcal{S}$ given by $(s', s'_o)$ sampled as $s_o \sim T(\cdot|b,a), s'_o \sim P^\nu_o(\cdot|s')$. It is easy to check that $\gamma^*$ is a coupling, i.e., 
$\gamma^* \in \Gamma(P_o(\cdot|b,a), T(\cdot|b,a))$. We show this and for this we drop the dependency on $b,a$ for ease of notation. First, $\gamma^*(A,B) = \int_{A \times B} d\gamma^*(s',s'_o) =  \int_A P^{\nu}_o(B|s')dT(s')$.
Thus, $\gamma^*(A,\mathcal{S}) = \int_A dT(s') = T(A)$ and $\gamma^*(\mathcal{S}, B) = \int_\mathcal{S} P^{\nu}_o(B|s') dT(s') = P_o(B)$. Also, note that $||s' - s'_o||_\infty \leq \epsilon$ for $d$ as the infinity norm because of the bound of adversarial perturbation implicit in $P^\nu_o$. Then,
\begin{align}
    W_1(P_o(\cdot~|~b,a), T(\cdot~|~b,a)) & = \inf_{\gamma \in \Gamma(P_o(\cdot|b,a), T(\cdot|b,a))} \int ||s' - s'_o||_\infty d\gamma^*(s',s'_o) \nonumber \\
    & \leq \int ||s' - s'_o||_\infty d\gamma(s',s'_o) \nonumber \\
    & \leq \epsilon \label{eq:secondstep}
\end{align}
Combining Eq.~\ref{eq:firststep} and Eq.~\ref{eq:secondstep} by triangle inequality we get
$$
W_1(T(\cdot~|~s_o,a),P_o(\cdot~|~b,a)) \leq \xi + \epsilon
$$

%Recall that $s'_o \sim T_o(\cdot~|~b,\pi)$ is a short form for $s'_o \sim P_o(\cdot~|~b'), b'=SE(b,s_o,a), s_o \sim P_o(\cdot~|~b)$

\end{proof}

The above results show that some basic structural properties are needed from the underlying system for bounding ACoE. One is that the value function should not change by a large amount due to small changes in state and another that the distribution of the next state should not be very different for two close by states. Clearly, an adversary can exploit systems that lack these properties.

\begin{proof}[Proof of Proposition~\ref{prop:bellman}]
    The proof is observed from the fact that C-ACoE can be viewed as an infinite horizon MDP with observations $s_o$ 
 as states, immediate cost as $R(s_o, \pi(s_o)) - R(b (s_o), a)$, and transition to next state $s'_o$ described by $s'_o \sim \nu(s'), s' \sim T(\cdot~|~s,a)$.
\end{proof}

\section{Adaptation for DQN}

\begin{algorithm}
\SetAlCapHSkip{.1em}
\caption{$\delta$-DQN}
\label{alg:a3b-dqn}
 % Initialize replay memory $D$;\\
Initialize network $\delta_{w}$ with random weights $w$ and target network $\widehat{\delta}_{w^-}$ with weights $w^- =w$\\
Initialize network $Q_{\theta}$ with random weights $\theta$ and target network $\widehat{Q}_{\theta^-}$ with weights $\theta^- =\theta$\\
Initialize replay buffer $B$\\
Set robustness temperature $\lambda$\\
\For{$\text{episode} \in \{1, \ldots, M\}$}{
%Get initial state $s^0_o$;\\
\For{$t = 0 \to H$}{
With prob. $1- \epsilon$, select $a^t\in\argmax_{a} Q_{\theta}(s^{t}_{o},a)-\lambda\delta_{w}(s^t_o, a)$, else select $a^t$ at random\\
Sample $k$ states in $N(s_o)$, compute $b(s)$ for each $s \in N(s_o)$\\
Compute C-ACoE: $\delta_R \!=\! R(s^t_o, a^t) \!- \sum_{s \in N(s_o)} b(s) R(s, a^t)$\\
Execute action $a_t$, get observed state $s^{t+1}_o$, store transition $B = B \cup (s^t_o, s^t, s^{t+1}_o, \delta_R)$\\
Sample mini-batch $M \sim D$;\\
\For{each $(s^i_o, a^i, s^{i+1}_o, \delta^i_R)$ in mini-batch $M$}{
$\text{Set target }y_i =
  \begin{cases}
    \delta^i_R       \text{, if episode terminates at step $i+1$} \\
    \delta^i_R + \gamma \min_{a'}{\delta}_{w^-}(s^{i+1}_o,a')   \text{, otherwise }
  \end{cases}$\\
$\text{Set target }q_i =
  \begin{cases}
    R(s^{t}_{o},a^t)       \text{, if episode terminates at step $i+1$} \\
    R(s^{i}_{o},a^i) + \gamma \min_{a'}{Q}_{\theta^-}(s^{i+1}_o,a')   \text{, otherwise }
  \end{cases}$\\
}
Perform a gradient descent to update $w$ using loss: $\sum_{i=1}^{|M|} \big[y_i - \delta_{w}\big(s^i_o, a^i\big)\big]^2$\\
Perform a gradient descent to update $\theta$ using loss: $\sum_{i=1}^{|M|} \big[q_i - Q_{\theta}\big(s^i_o, a^i\big)\big]^2$\\
}
Every $K$ steps reset $w^- = w$ and $\theta^- = \theta$;
}
\end{algorithm}

\section{Estimation of Belief for Continuous State Space}
\begin{lemma} Assume $z(s) < B$ for some constant $B$. Consider $n$ uniformly random samples from $C$ stored in $N(s_o)$. Let $R$ and $\hat{R}$ be as defined above. Then, $(1/n)\sum_{s' \in N(s_o)} e^{z(s')}$ is an unbiased estimate of $(1/vol(C)) \int_{s\in C} e^{z(s)} ds$. There exists $n$ large enough so that $1 + \epsilon > \frac{(1/vol(C)) \int_{s\in C} e^{z(s)} ds}{(1/n)\sum_{s' \in N(s_o)} e^{z(s')}} > 1- \epsilon$ with probability $1 - \delta$ for given small $\epsilon, \delta$. And then, $R(1+ \epsilon) > E[\hat{R}] > R(1- \epsilon)$ with probability $1 - \delta$.
\end{lemma}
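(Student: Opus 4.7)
The plan is to treat this as three tightly linked claims about the Monte Carlo estimator for the belief-weighted reward integral, and to prove them in the order given. Let me introduce shorthand: $\hat{N} = \frac{1}{n}\sum_{s' \in N(s_o)} e^{z(s')}$ and $N = \frac{1}{\mathrm{vol}(C)}\int_C e^{z(s)}\,ds$, and analogously $\hat{M} = \frac{1}{n}\sum_{s' \in N(s_o)} R(s',a) e^{z(s')}$ and $M = \frac{1}{\mathrm{vol}(C)}\int_C R(s,a) e^{z(s)}\,ds$. Then $R = M/N$ and $\hat{R} = \hat{M}/\hat{N}$. The unbiasedness claim is then immediate: since each sample $s'$ is drawn i.i.d.\ from the uniform distribution on $C$, we have $\mathbb{E}[e^{z(s')}] = N$, so $\mathbb{E}[\hat{N}] = N$ by linearity. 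The same calculation yields $\mathbb{E}[\hat{M}] = M$, which will be needed later.

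For the second claim, I would apply Hoeffding's inequality to $\hat{N}$. The bound $z(s) < B$ implies $e^{z(s)} \in (0, e^B]$, so each summand in $\hat{N}$ lies in $[0, e^B]$, and for any $t > 0$,
\begin{equation*}
\Pr\!\big(|\hat{N} - N| > t\big) \;\le\; 2\exp\!\big(-2nt^2/e^{2B}\big).
\end{equation*}
Taking $t = \epsilon' N$ with $\epsilon' = \epsilon/(1+\epsilon)$ and solving for $n$ such that the right-hand side is at most $\delta$, one obtains an explicit lower bound on $n$ (roughly $n \gtrsim e^{2B}\log(2/\delta)/(2\epsilon'^2 N^2)$). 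On the resulting event, $\hat{N} \in [(1-\epsilon')N, (1+\epsilon')N]$, which gives $N/\hat{N} \in [1/(1+\epsilon'), 1/(1-\epsilon')] \subset (1-\epsilon, 1+\epsilon)$ by the choice of $\epsilon'$.

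For the third claim, I would apply the same Hoeffding argument to $\hat{M}$ (the summands $R(s',a) e^{z(s')}$ lie in $[0, e^B]$ because $R \in [0,1]$), with a slightly smaller tolerance $\epsilon''$. A union bound over the two concentration events then gives, with probability at least $1-\delta$, both $\hat{N}/N \in (1-\epsilon'', 1+\epsilon'')$ and $\hat{M}/M \in (1-\epsilon'', 1+\epsilon'')$. Writing $\hat{R}/R = (\hat{M}/M)\cdot(N/\hat{N})$, the product lies in $\big((1-\epsilon'')/(1+\epsilon''),\,(1+\epsilon'')/(1-\epsilon'')\big)$, and choosing $\epsilon''$ so that $(1+\epsilon'')/(1-\epsilon'') \le 1+\epsilon$ (e.g.\ $\epsilon'' = \epsilon/(2+\epsilon)$) yields $\hat{R} \in (R(1-\epsilon), R(1+\epsilon))$ with probability $1-\delta$, which is the content of the third claim (read with $\hat{R}$ in place of the expectation symbol, since $\mathbb{E}[\hat R]$ is deterministic and the probabilistic qualifier is naturally attached to the random variable $\hat R$; alternatively one can report the conditional expectation on this high-probability event, which inherits the same inclusion).

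The only real obstacle is the coupling between numerator and denominator in the ratio estimator: $\mathbb{E}[\hat{M}/\hat{N}] \ne \mathbb{E}[\hat{M}]/\mathbb{E}[\hat{N}]$ in general, so a direct in-expectation argument is unavailable and one must route through simultaneous concentration of the two estimators, which is why the statement is phrased with a probability-$1-\delta$ qualifier. A secondary subtlety is the implicit assumption $N > 0$ (and $M > 0$ for the multiplicative bound on $R$), both of which hold under mild regularity since $e^{z(s)} > 0$ on $C$ and $R \ge 0$.
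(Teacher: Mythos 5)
Your proof is correct, and the first two claims are handled exactly as in the paper: unbiasedness by linearity over the i.i.d.\ uniform samples, and concentration of the denominator via Hoeffding using the range $e^{z(s)} \in (0, e^{B}]$. Where you genuinely diverge is the final claim. The paper writes $E[\hat{R}]$ out as an $n$-fold integral, multiplies and divides the integrand by $\int_{C} e^{z(s)}\,ds$, replaces the random factor $\int_{C} e^{z}\,ds \big/ \sum_{i} e^{z(s_i)}$ by its high-probability bound $(1\pm\epsilon)\,\mathrm{vol}(C)/n$ from the second claim, and then evaluates the remaining integral exactly by linearity to recover $(1\pm\epsilon)R$ --- so it needs concentration of the denominator only and never invokes concentration of the numerator $\hat{M}$. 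You instead run Hoeffding on both $\hat{M}$ and $\hat{N}$ (valid since $R(s,a)\in[0,1]$ keeps the numerator summands in $[0,e^{B}]$), take a union bound, and control the ratio $\hat{R}/R = (\hat{M}/M)(N/\hat{N})$ directly. Your route costs an extra concentration event but buys a well-posed conclusion: a high-probability multiplicative bound on the random variable $\hat{R}$ itself. The paper's route is slicker in that one sense, but as written it substitutes an event-conditional bound inside an unconditional expectation, which strictly requires splitting off the complementary event (harmless here because $\hat{R}\le 1$ bounds its contribution by $\delta$, though the paper does not carry this out). You also correctly identify the semantic wrinkle in the statement --- attaching ``with probability $1-\delta$'' to the deterministic quantity $E[\hat{R}]$ --- and your proposed readings (a bound on $\hat{R}$ itself, or on the expectation conditioned on the good event) are the defensible ones; the paper's proof is in fact closer in spirit to the latter.
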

\begin{proof}
    Note that $E_{s' \sim U}[e^{z(s')}] = (1/vol(C)) \int_{s\in C} e^{z(s)} ds$, which gives us the first unbiasedness result. The second result comes from a straightforward application of Hoeffding's concentration inequality where the bound $B$ is used. Then, we can see that 
    \begin{align*}
    E[\hat{R}] & = \int_{s_1 \in C}  \ldots  \int_{s_n \in C} \frac{\sum_{i} R(s_i, a)e^{z(s_i)}}{\sum_{i} e^{z(s_i)}} u(s_1)\ldots u(s_n) ds_1 \ldots ds_n \\
    & = \int_{s_1 \in C}  \ldots  \int_{s_n \in C} \frac{\sum_{i} R(s_i, a)e^{z(s_i)}}{\int_{s\in C} e^{z(s)} ds} \frac{\int_{s\in C} e^{z(s)} ds}{\sum_{i} e^{z(s_i)}} u(s_1)\ldots u(s_n) ds_1 \ldots ds_n \\
    & \leq \frac{(1 + \epsilon) vol(C)}{n} \int_{s_1 \in C}  \ldots  \int_{s_n \in C} \frac{\sum_{i} R(s_i, a)e^{z(s_i)}}{\int_{s\in C} e^{z(s)} ds} u(s_1)\ldots u(s_n) ds_1 \ldots ds_n \\
    & = \frac{(1 + \epsilon) vol(C)}{n} \sum_{i} \int_{s_i \in C}  \frac{ R(s_i, a)e^{z(s_i)}}{\int_{s\in C} e^{z(s)} ds} u(s_i) ds_i \\
    & =  \frac{(1 + \epsilon) vol(C)}{n}  \times \frac{n}{vol(C)} \int_{s_i \in C}   R(s_i, a)p(s_i) ds_i\\
    & = (1 + \epsilon) R
    \end{align*}
A similar argument holds for the lower bound, thereby, leading to the required result, 
\end{proof}

\section{Defining ACoE Belief Methods with State Histories}
As mentioned in the paper, our methods are amenable to LSTM state histories as well, although empirically we find it to be not necessary (Table~\ref{tab:lstm-acoe}). Below, we define A2B and A3B when considering a state history of length 2.\par
\textbf{A2B:}
Consider a time window of two with the current observation as $s_{o,1}$ and the previous observation as $s_{o,0}$.
    $$b(s_1, s_0) = \frac{e^{{D_{KL}(\pi(s_1,s_0)||\pi(s_{o,1},s_{o,0}))}}}{\sum_{(s'_1,s'_0)\in N(s_{o,1}) \times N(s_{o,1})}e^{D_{KL}(\pi(s'_1,s'_0)||\pi(s_{o,1},s_{o,0})}}$$ and $$b(s_1) = \sum\limits_{s_0 \in N(s_{o,0})}b(s_1,s_0)$$
For the initial timestep $s_{o,0}$ should be fixed to some constant, i.e. using the single-state A2B formula. This formulation does scale exponentially with the size of the neighborhoods, however we can scale down the previous state's neighborhood by considering a subset $s_0 \in N(s_{o,0})$ that had the highest belief. \par
\textbf{A3B}:
$$b(s_1, s_0) = \frac{e^{z(s_1,s_0)}}{\sum_{(s'_1,s'_0)\in N(s_{o,1}) \times N(s_{o,1})}e^{z(s'_1,s'_0)}}$$ and $$b(s_1) = \sum\limits_{s_0 \in N(s_{o,0})}b(s_1,s_0) \; .$$
Here, $$z(s_1, s_0) = \frac{D_{KL}(\pi(s_{o,1},s_{o,0})||\pi(s_1,s_0)}{D_{KL}(\pi(\nu(s_1),\nu(s_0))||\pi(s_1,s_0))}$$

\section{Additional Experimental Results}
We provide empirical investigations into a number of specifics that were cut from the main paper for space. Namely, fine-grained evaluations against long-horizon attack strategies in Figures \ref{fig:results-critical}, \ref{fig:results-timed} and \ref{fig:results-strategic}, and further empirical comparison to Protected-PPO \citep{liu-protected}. We also provide an extended version of the results tables in the main paper in Table \ref{tab:results-appendix-1} and \ref{tab:results-appendix-2} which include a few more baselines, namely CARRL ~\citep{everett:carrl}, BCL~\citep{Wu:BCL}, and CAR-DQN~\citep{li:optimal}.
\subsection{Long-horizon Adversaries}
In prior works published before c. 2023, robust RL methods had been evaluated against myopic adversaries (i.e. adversaries give perturbations based on the current observation and victim policy, independent of future states and actions), and long-horizon adversarial actors were not considered. In more recent works PA-AD \citep{sun2023strongest} is considered, however there are a variety of approaches each with distinct targeting strategies that can be evaluated. In our additional experiments, we include assessments of robust RL methods against the Strategically Timed attack \citep{lin:tactics2017}, where the attacker computes the most effective attack intervals, and the Critical Point attack \citep{Sun_Zhang_Xie_Ma_Zheng_Chen_Liu_2020}, in which the attacker delivers perturbations after computing the score reduction $N$ steps into the future. \par
We omit Protected-PPO from these granular long-horizon adversary experiments because these adversaries learn to attack a fixed victim policy at test time, and as the Protected-PPO method adapts over multiple episodes at test time, a fair comparative methodology is unclear. For worst-case PA-AD results with Protected-PPO, we refer to Table \ref{tab:Protected-no-adaptation} and the PA-AD experiments table in the main paper.

\subsection{Empirical Evaluations with Protected-PPO}
\textbf{Online Adaptations}: The most up-to-date robust RL method in this space is Protected-PPO \citep{liu-protected}, which computes a set of non-dominated policies during training. A key part of this method is the test time adaptation step in which a regret minimization algorithm (EXP3) with the set of policies is run for multiple rounds (each round is full policy episode) and the weights are updated \textit{at test time} based on empirical performance against a fixed adversary, over $T=800$ rounds of EXP3 (\citep{liu-protected} reports 800, but we find the actual convergence to be faster in most environments). Because the evaluation setup for this method is quite different from all existing literature, we provide an empirical investigation into how the method performs under standard test setups as it is helpful to understand how it fits into the robust RL landscape. \par
%It should be understood that test-time adaptation of policies is a rearrangement of Adversarial Training, which we cover the pros and cons of in Related Works. More importantly, 
The applications of interest for safe and robust RL such as autonomous vehicle or industrial control realistically do not accommodate any margin for error within one episode, let alone adaptation of a policy over multiple episodes. \par
To this end, we test the performance of Protected-PPO without any test time adaptation ($T=1$, which denoted with $\dagger$ in the main paper) and with limited test time adaptation ($T=10$). In Table~\ref{tab:Protected-no-adaptation}, we find the unadapted policy performs poorly compared to the weakly-adapted counterpart, which is more uniformly robust. We also note that the weakly-adapted threshold of ($T=10$) adaptation rounds doesn't improve performance uniformly across domains, as \textit{Ant} and \textit{Hopper} both become robust in that short time while \textit{Walker} does not. 
%To obtain a non-adapted Protected policy, we uniformly sample the published set of non-dominated policies, and check the performance of this policy.\par

\textbf{LSTM History Length}: In Table \ref{tab:Protected-no-history}, we also perform an investigation into the importance of an LSTM history for the Protected framework. We provide results for a Protected-PPO model using only linear hidden layers, labeled Protected$^{H=1}$. We find that the state history is quite integral to the performance of the method, which functions as the belief about the adversary for the method. This supports the ideas that the partially-observable nature of adversarial RL is the main challenge and must be addressed.

\begin{figure}[h!]
    \centering
    \includegraphics[width=0.3\textwidth]{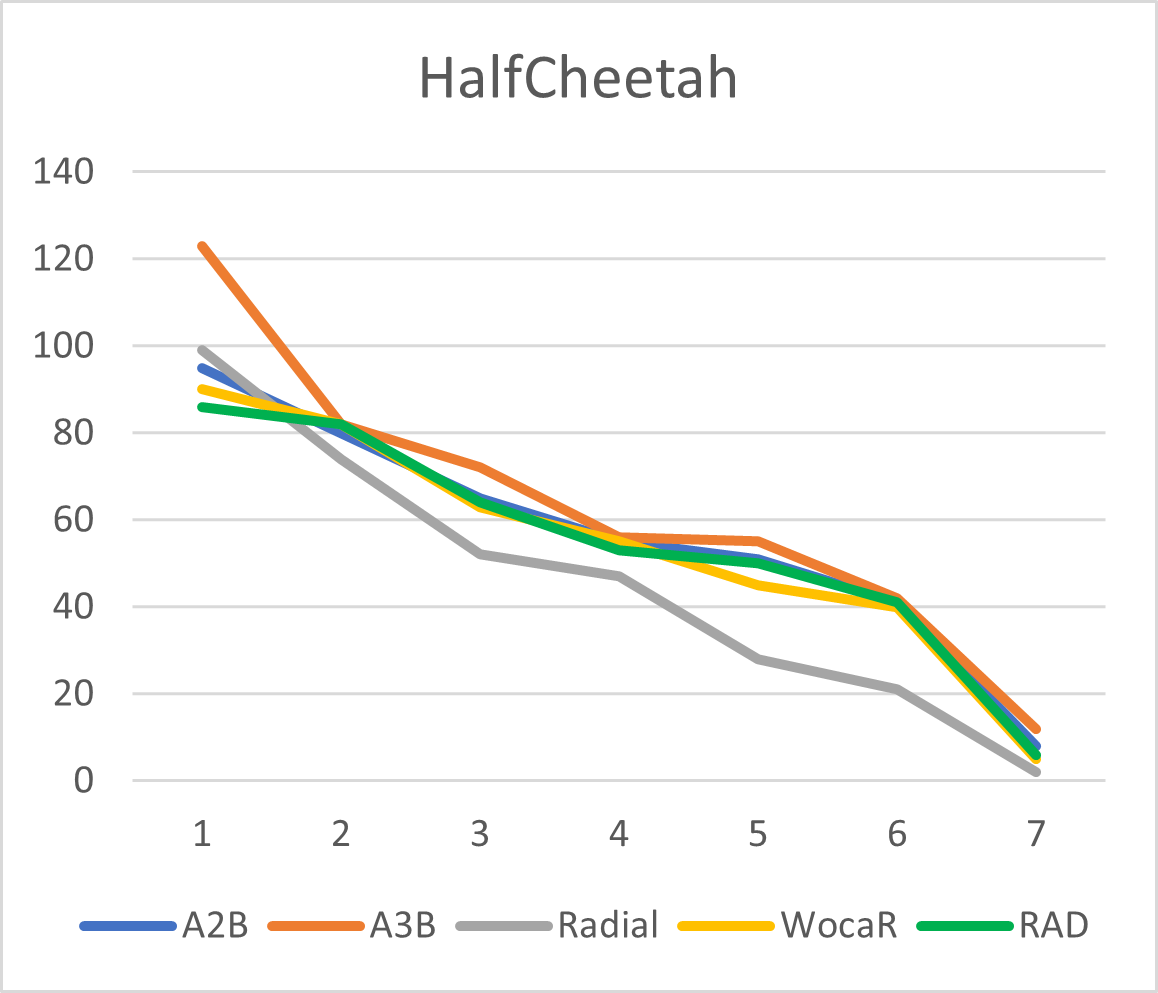}
    \includegraphics[width=0.3\textwidth]{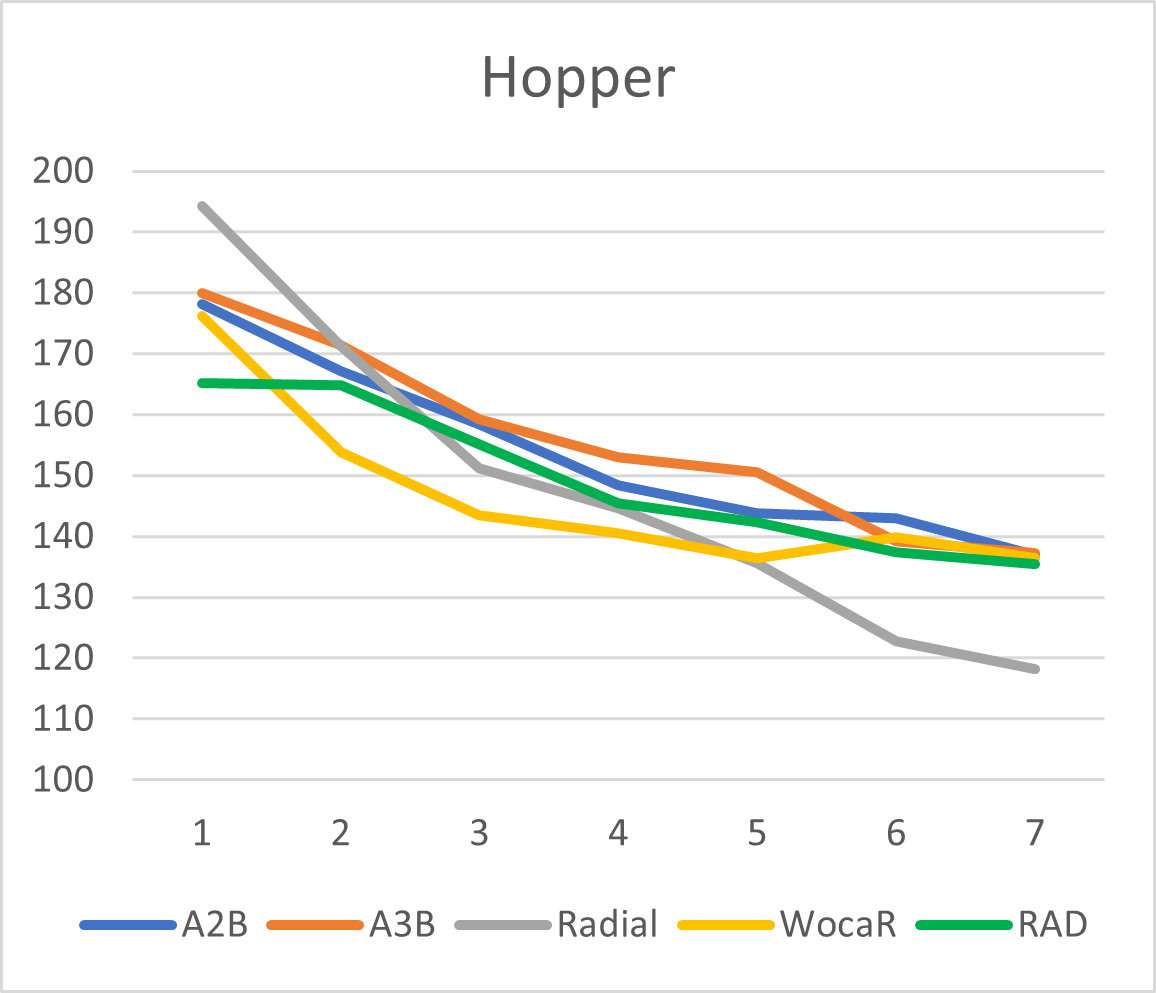}
    \includegraphics[width=0.3\textwidth]{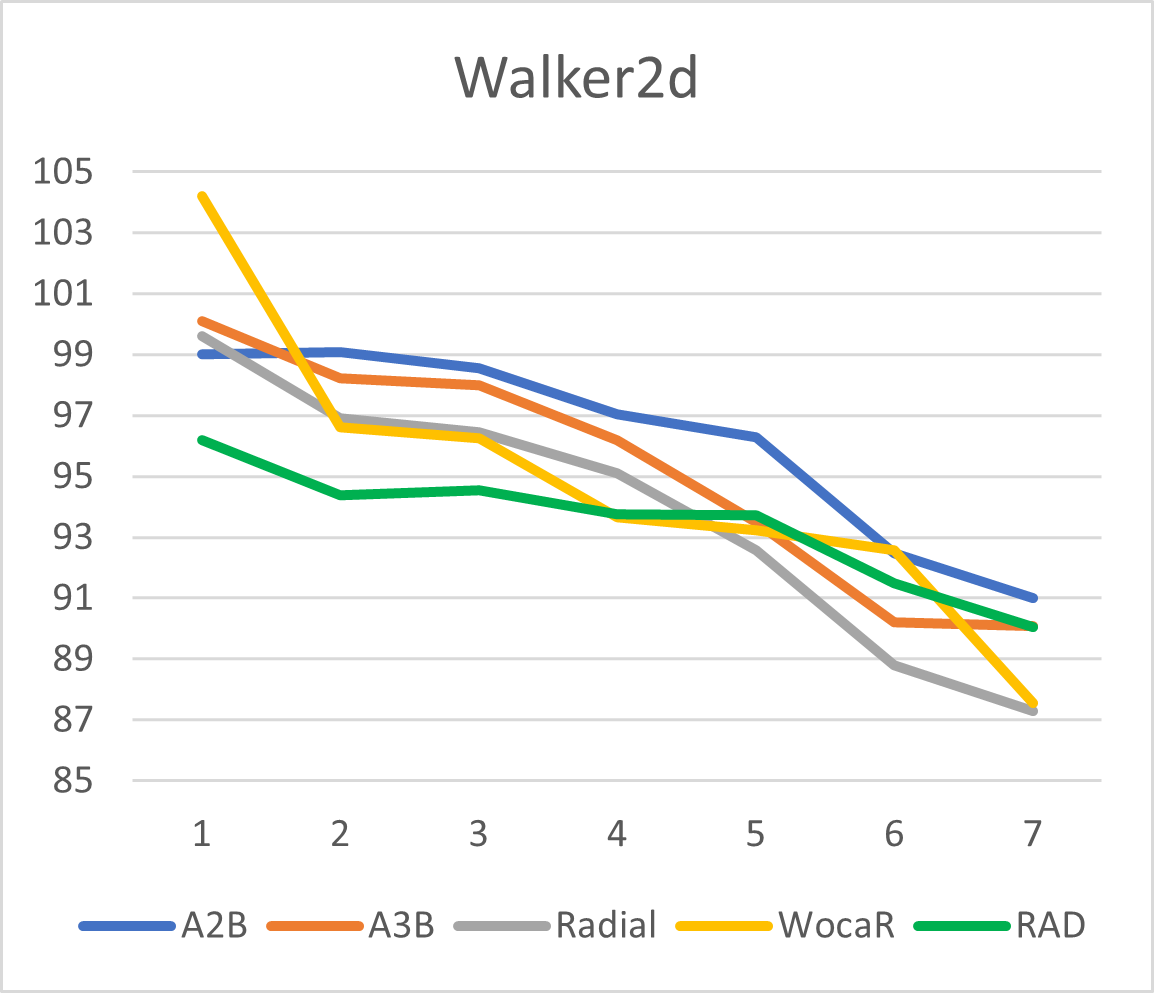}
    \caption{Robust agents vs. a Critical Point strategic adversary \citep{Sun_Zhang_Xie_Ma_Zheng_Chen_Liu_2020} with increasing search sizes.}
    \label{fig:results-critical}
\end{figure}

\begin{figure}[h!]
    \centering
    \includegraphics[width=0.3\textwidth]{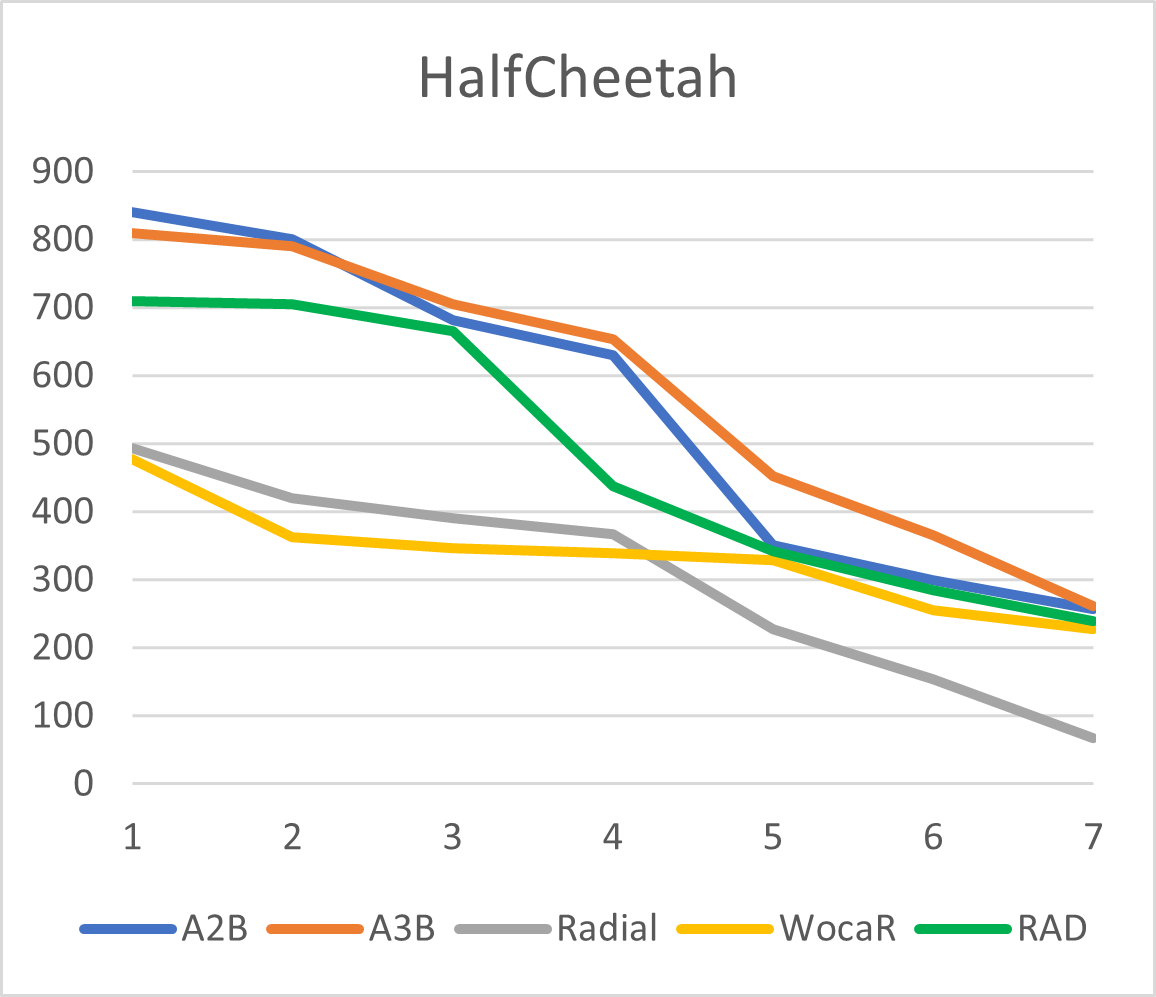}
    \includegraphics[width=0.3\textwidth]{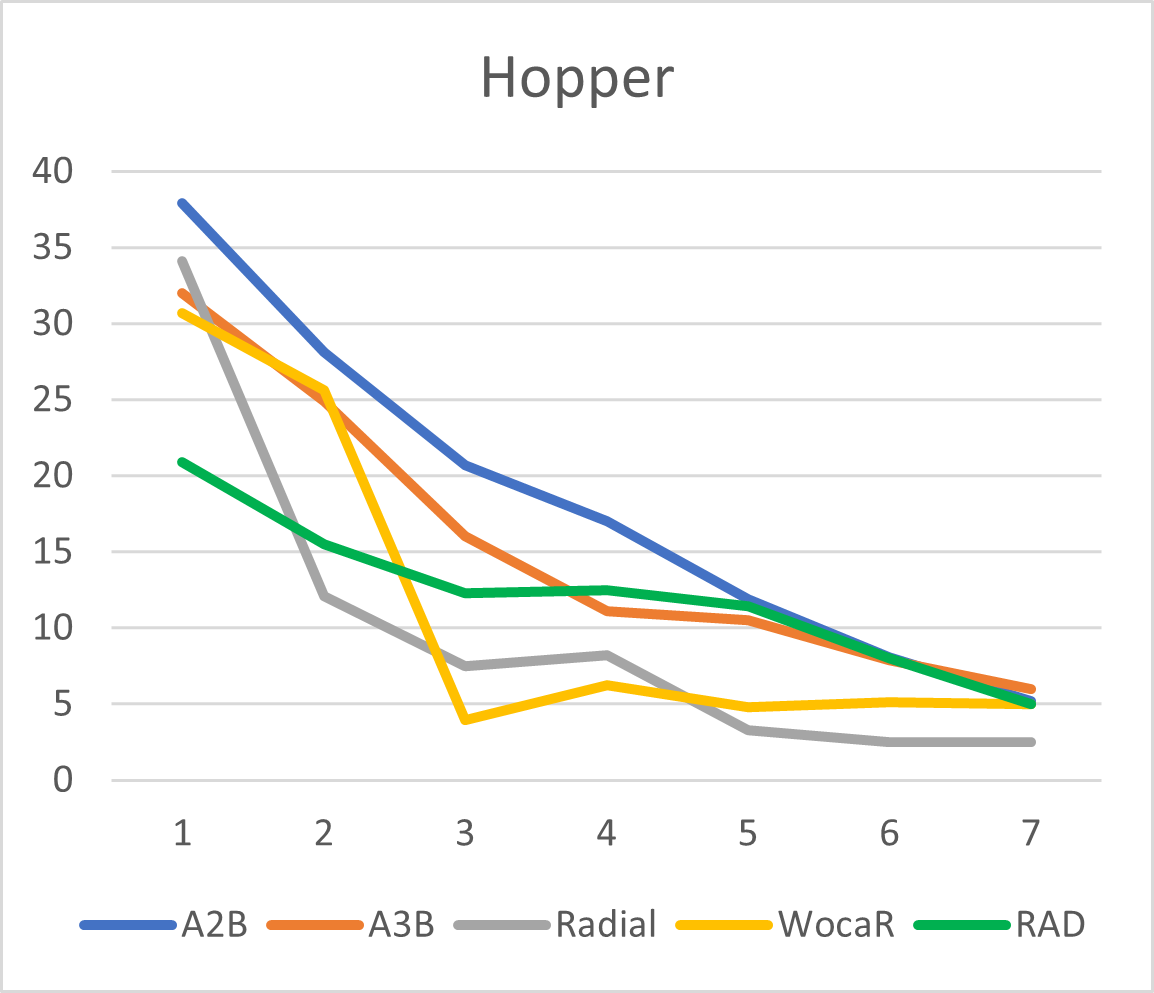}
    \includegraphics[width=0.3\textwidth]{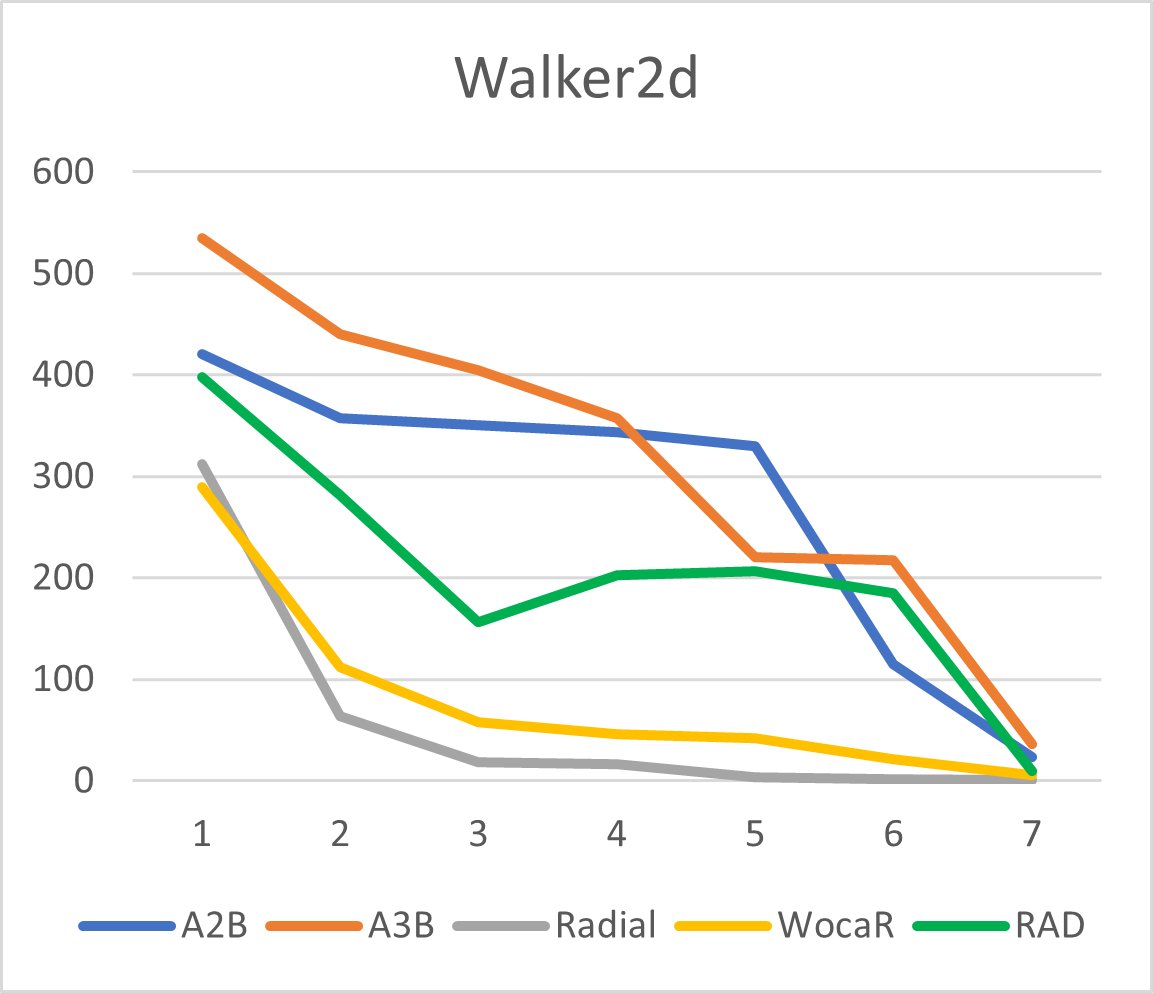}
    \caption{Robust agents vs. a Strategically Timed Attack adversary \citep{lin:tactics2017}, as the length of perturbation increases. We find that as the level of strategy increases from long-horizon attackers, C-ACoE minimization improves robust performance, relative to other methods. }
    \label{fig:results-timed}
\end{figure}

\begin{figure}
    \centering
    \includegraphics[width=\textwidth]{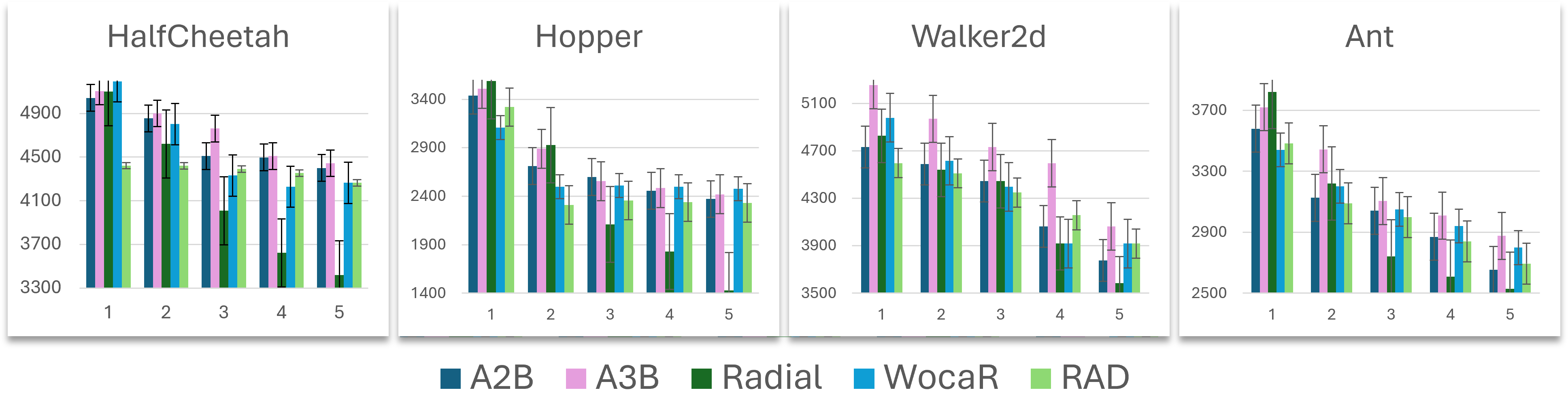}
    \caption{Robust agents vs. a PA-AD attacker ~\citep{sun2023strongest}, as the optimality of the attacker policy increases. To represent levels of optimality, we save PA-AD model weights at 5 evenly distributed points across the training epochs. We find that as the level of strategy increases from long-horizon attackers, C-ACoE minimization achieves more robust performance, relative to other methods. }
    \label{fig:results-strategic}
\end{figure}

\begin{table}
\setlength{\tabcolsep}{2pt}
    \caption{Experimental results versus myopic adversaries. Most robust scores are in \textbf{bold}. Methods are evaluated on DQN implementations in Atari and Highway, with adversarial perturbation bounds permitted as $\epsilon$=0.1 for PGD, and 0.15 for MAD. *CAR-DQN results are reported directly from their publication, which only uses PGD $\epsilon$=0.02.}
\centering
\label{tab:results-appendix-1}
\begin{tabular}{ lrrrrrrr }
 \toprule
 Method & Unperturbed & MAD & PGD& Unperturbed &MAD  & PGD \\
 \bottomrule
 &\multicolumn{3}{c}{highway-fast-v0} & \multicolumn{3}{c}{merge-v0}\\
 \cmidrule(lr){2-4}\cmidrule(lr){5-7}
 PPO &24.8$\pm$5.42         & 13.63$\pm$19.85   &15.21$\pm$16.1 &14.94$\pm$0.01       &10.2$\pm$0.02   &10.42$\pm$0.95\\
  CARRL & 24.4$\pm$1.10       & 4.86$\pm$15.4     & 12.43$\pm$3.4 & 12.6$\pm$0.01     & 12.6$\pm$0.01     & 12.02$\pm$0.01\\
 RADIAL& {28.55$\pm$0.01}  &2.42$\pm$1.3 &14.97$\pm$3.1 & {14.86$\pm$0.01} & 11.29$\pm$0.01   & 11.04$\pm$0.91 \\
 WocaR  & 21.49$\pm$0.01       & 6.15$\pm$0.3      & 6.19$\pm$0.4 & {14.91$\pm$0.04} & 12.01$\pm$0.28 & 11.71$\pm$0.21\\
 RAD & 21.01$\pm$0.01   & 20.59$\pm$4.1 &{20.02$\pm$0.01} & 13.91$\pm$0.01 &  {13.90$\pm$0.01} &11.72$\pm$0.01 \\
  A2B & 24.8$\pm$0.01 & {23.11$\pm$0.01} & 20.8$\pm$12.6 & 14.91$\pm$0.01 & {14.23$\pm$0.8} & 12.92$\pm$0.13\\
 A3B & 23.8$\pm$0.01   & \textbf{23.21$\pm$0.01} &\textbf{22.61$\pm$14.1} & {14.91$\pm$0.17}  &\textbf{14.88$\pm$0.17}   &\textbf{14.89$\pm$0.17}\\

 \midrule
 &\multicolumn{3}{c}{roundabout-v0} & \multicolumn{3}{c}{intersection-v0} \\
 \cmidrule(lr){2-4}\cmidrule(lr){5-7}
 PPO   & 10.33$\pm$0.40      & 7.41$\pm$0.69        & 3.92$\pm$1.35 & 9.26$\pm$7.6    & 3.62$\pm$11.63 & 6.75$\pm$12.93\\
  CARRL & 9.75$\pm$0.01      & {9.75$\pm$0.01} & 5.92$\pm$0.12 & 8.0$\pm$0 & 7.5$\pm$0 & 9.0$\pm$0.1\\
  RADIAL    & 10.29$\pm$0.01  & 5.33$\pm$0.01      & 8.77$\pm$2.4 &   {10.0$\pm$0}  &   2.4$\pm$5.1  & 9.61$\pm$0.1    \\
 WocaR  & 6.75$\pm$2.5  & 6.05$\pm$0.14  & 6.48$\pm$2.7 & {10.0$\pm$0.05} & 9.47$\pm$0.3 & 3.26$\pm$0.4\\
 RAD & 9.22$\pm$0.3 &  8.98$\pm$0.3 &9.11$\pm$0.3 & 9.85$\pm$1.2 & 9.71$\pm$2.3 &9.62$\pm$0.1 \\
 A2B & 10.5$\pm$0.0& 10.1$\pm$0.1& 10.0$\pm$0.5 & 10.0$\pm$0 & \textbf{10.0$\pm$0} & \textbf{9.88$\pm$0.12} \\
 A3B   &   {10.5$\pm$0.01}  &\textbf{10.33$\pm$0.01} & \textbf{10.18$\pm$2.1} &   {10.0$\pm$0}  & {9.68$\pm$0 } & \textbf{9.88$\pm$0.1} \\
 \toprule
 Method & Unperturbed & MAD & PGD& Unperturbed &MAD  & PGD \\
 \bottomrule
& \multicolumn{3}{c}{Pong} & \multicolumn{3}{c}{Freeway} \\
  \cmidrule(lr){2-4}\cmidrule(lr){5-7}
 %DQN &  {21.0$\pm$0} &  -21.0$\pm$0.0                & -21.0$\pm$0.0\\
 PPO &{21.0$\pm$0}   & -20.0$\pm 0.07$   &-19.0$\pm$1.0 &29 $\pm$ 3.0       & 4 $\pm$ 2.31   &2$\pm$2.0 \\
 CARRL &  13.0 $\pm$1.2&    11.0$\pm$0.010             & 6.0$\pm$1.2 &    18.5$\pm$0.0   &   19.1 $\pm$1.20   &  15.4$\pm$0.22\\
 BCL & 21$\pm$ 0 & -- & \textbf{21$\pm$ 0} & 34.0 $\pm$ 0 & -- & 21.2$\pm$  0.5 \\
 CAR-DQN* & 21$\pm$ 0 & -- & \textbf{21$\pm$ 0} & 34.0 $\pm$ 0 & -- & 33.7 $\pm$ 0.1 \\
 RADIAL& {21.0$\pm$0}&11.0$\pm$2.9 &\textbf{21.0$\pm$ 0.01} & 33.2$\pm$0.19  &29.0$\pm$1.1 &24.0$\pm$0.10 \\
 WocaR  & {21.0$\pm$0} &   18.7 $\pm$0.10  & 20.0 $\pm$ 0.21 & 31.2$\pm$0.41       & 19.8$\pm$3.81   & 28.1$\pm$3.24\\
 RAD & {21.0$\pm$0}&  14.0 $\pm$ 0.04   & 14.0 $\pm$ 2.40 &  33.2$\pm$0.18&  {30.0$\pm$0.23} & 27.7$\pm$0.2 \\
 A2B & {21.0$\pm$0}& { 20.1$\pm$0.04}&\textbf{21.0$\pm$0.01} &  33.2$\pm$0.18&  {30.1$\pm$0.43} & {30.8$\pm$1.51}\\
  A3B & 21.0$\pm$0 & \textbf{20.8$\pm$0.7} & \textbf{21.0$\pm$0.01} &  33.2$\pm$0.18 &  \textbf{31.0$\pm$0.87} & \textbf{31.1$\pm$1} \\

\midrule
&\multicolumn{3}{c}{BankHeist} & \multicolumn{3}{c}{RoadRunner} \\
  \cmidrule(lr){2-4}\cmidrule(lr){5-7}
 %DQN & 1325$\pm$5 &    0$\pm$0  & 0.4$\pm$55 \\
 PPO &1350$\pm$0.1         & 680$\pm$419   &0$\pm$116 &42970$\pm$210         & 18309$\pm$485   &10003$\pm$521 \\
 CARRL &  849$\pm$0& 830$\pm$32   & 790$\pm$110 & 26510$\pm$20 & 24480$\pm200$  & 22100$\pm$370\\
 BCL & 1215 $\pm$ 8.4 & -- & 894.1$\pm$ 9.2 & 42490$\pm$1309 & -- &23291$\pm$1121 \\
 CAR-DQN* & 1349 $\pm$ 3 & -- & \textbf{1347$\pm$3.6} & 49700$\pm$1015 & -- & \textbf{43286$\pm$801} \\
 RADIAL& {1349$\pm$0 }&997$\pm$3 &1130$\pm$6 & {44501$\pm$1360} &23119$\pm$1100 &24300$\pm1315$ \\
 WocaR  & 1220$\pm$0       & 1207$\pm$39      & 1154$\pm$94 & 44156$\pm2270$       & 25570$\pm$390      & 12750$\pm$405\\
 RAD & 1340$\pm$0 &  1170$\pm$42  & 1211$\pm$56 & 42900$\pm$1020 &  29090$\pm$440  & 27150$\pm$505 \\
 A2B & 1350$\pm$0 &  \textbf{1230$\pm$42}  & {1240$\pm$56} & 44050$\pm$1020 &  38205$\pm$440  & 40015$\pm$505\\
  A3B & 1350$\pm$0 & \textbf{1230$\pm$12} & \textbf{1250$\pm$30} & 44290$\pm$1250& \textbf{41001$\pm$610} & \textbf{42645$\pm$458} \\
 \toprule
 Method & Unperturbed & MAD & PGD& Unperturbed &MAD  & PGD \\
 \bottomrule
\end{tabular}
\end{table}

\begin{table}
\centering
\setlength{\tabcolsep}{2pt}
\caption{Experimental results versus myopic adversaries. Most robust scores are in \textbf{bold}. Methods are evaluated on PPO implementations in Mujoco, with adversarial perturbation bounds permitted as $\epsilon$=0.1 for PGD, and 0.15 for MAD. Protected-PPO is {\color{gray!70}grayed out} due to differences in evaluation methodology as outlined in the main paper. For fine-grained comparisons, see Tables \ref{tab:Protected-no-history} and \ref{tab:Protected-no-adaptation}.}\label{tab:results-appendix-2}
\begin{tabular}{lrrrrrr}
\toprule
&\multicolumn{3}{c}{Hopper} & \multicolumn{3}{c}{Walker2d}\\
  \cmidrule(lr){2-4}\cmidrule(lr){5-7}
 PPO &4128 $\pm$ 56        & 1110$\pm$32   &128$\pm$105 &5002 $\pm$ 20  & 680$\pm$1570          &730$\pm$262 \\
 RADIAL& {3737$\pm$75}  &2401$\pm$13 & 3070$\pm$31 & {5251$\pm$10} &3895$\pm$128           &3480$\pm$3.1 \\
 WocaR  & 3136$\pm$463       &  1510 $\pm$ 519   & 2647 $\pm$310 & 4594$\pm$974  & {3928$\pm$1305}       & 3944$\pm$508 \\
{\color{gray!70} Protected} &{\color{gray!70} 3652$\pm$108} & {\color{gray!70}2512$\pm$392}& {\color{gray!70}2221$\pm$ 775} & {\color{gray!70}6319$\pm$31} & {\color{gray!70}5148$\pm$1416}& {\color{gray!70}4720$\pm$ 1508}  \\ 
RAD & 3473$\pm$23   & {2783$\pm$325}     &{3110$\pm$30} & 4743$\pm$78& 3922$\pm$426    & {4136$\pm$639} \\
 A2B & 3710$\pm$11 & 3240$\pm$41 & 3299$\pm$28 & 4760$\pm$61 & 4636$\pm$87 & 4708$\pm$184 \\
 A3B   & 3766$\pm$23   & \textbf{3370$\pm$275}     &\textbf{3465$\pm$17} &  5341$\pm$60  & \textbf{5025$\pm$94} &  \textbf{5292$\pm$231}\\
 %Soap-CCER & 3685  $\pm$ 45 & 2320 $\pm$ 130 &  \textbf{3803$\pm$ 35}\\ 

 \toprule
 % \hline
 %Algorithm& Unperturbed & WC Policy & PGD $\epsilon\!\!=\!\frac{5}{255}$ \\
  &\multicolumn{3}{c}{HalfCheetah} & \multicolumn{3}{c}{Ant}\\
  \cmidrule(lr){2-4}\cmidrule(lr){5-7}

 PPO &{5794 $\pm$ 12}        & 1491$\pm$20   &-27$\pm$1288 & 5620$\pm$29 & 1288$\pm$491 & 1844$\pm$330\\
 RADIAL& 4724$\pm$76  &4008$\pm$450 &{3911$\pm$129} & 5841$\pm$34 & 3210$\pm$380 & 3821$\pm$121\\
 WocaR  & 5220$\pm$112     &  3530$\pm$458  & 3475$\pm$610& 5421$\pm$92 & 3520$\pm$155 & 4004$\pm$98 \\
 {\color{gray!70} Protected} & {\color{gray!70}7095$\pm$88} &  {\color{gray!70}4792$\pm$1480 } &{\color{gray!70} 4680$\pm$1203 } &{\color{gray!70} 5769$\pm$290} & {\color{gray!70}4440$\pm$1053}& {\color{gray!70}4228$\pm$ 484 }\\ 
 RAD & 4426$\pm$54   & {4240$\pm$4} & {4022$\pm$851} & 4780$\pm$10 & 3647$\pm$32 & 3921$\pm$74 \\ 
 A2B & 5192 $\pm$56 & 4855$\pm$ 120 & 4722$\pm$33 & 5511$\pm$13 & 3824$\pm$218 & 4102$\pm$315 \\
 A3B    & 5538$\pm$20& \textbf{4986$\pm$41} & \textbf{5110$\pm$22}  & 5580$\pm$41 & \textbf{4071$\pm$242} & \textbf{4418$\pm$290}  \\
 \bottomrule
\end{tabular}
\end{table}

\begin{table}
\centering
    \caption{Comparison to the \textit{Protected} framework \citet{liu-protected} with a history of only one state. Here, we demonstrate superior robust performance when information is limited.}
    \label{tab:Protected-no-history}
    \begin{tabular}{ lrrrr }
 \toprule
 Method & Unperturbed & MAD & Unperturbed &MAD   \\
 \bottomrule
&\multicolumn{2}{c}{Hopper} & \multicolumn{2}{c}{Walker2d}\\
  \cmidrule(lr){2-3}\cmidrule(lr){4-5}
 PPO &4128 $\pm$ 56        & 1110$\pm$32    &5002 $\pm$ 20  & 680$\pm$1570         \\
 WocaR  & 3136$\pm$463       &  1510 $\pm$ 519    & 4594$\pm$974  & {3928$\pm$1305} \\
 Protected$^{H=1}$ & 2451$\pm$81 & 2198$\pm$233&  3509$\pm$32 & 3410$\pm$41 \\ 
 A2B & 3710$\pm$11 & 3240$\pm$41  & 4760$\pm$61 & 4636$\pm$87 \\
 A3B   & 3766$\pm$23   & \textbf{3370$\pm$275} &  5341$\pm$60  & \textbf{5025$\pm$94} \\
 %Soap-CCER & 3685  $\pm$ 45 & 2320 $\pm$ 130 &  \textbf{3803$\pm$ 35}\\ 

 \toprule
 % \hline
 %Algorithm& Unperturbed & WC Policy & PGD $\epsilon\!\!=\!\frac{5}{255}$ \\
  &\multicolumn{2}{c}{HalfCheetah} & \multicolumn{2}{c}{Ant}\\
  \cmidrule(lr){2-3}\cmidrule(lr){4-5}

 PPO &{5794 $\pm$ 12}        & 1491$\pm$20  & 5620$\pm$29 & 1288$\pm$491 \\
 WocaR  & 5220$\pm$112     &  3530$\pm$458  & 5421$\pm$92 & 3520$\pm$155\\
 Protected$^{H=1}$ & 3210$\pm$18 &  2241$\pm$392   & 3997$\pm$285 & 2331$\pm$277 \\ 
 A2B & 5192 $\pm$56 & 4855$\pm$ 120  & 5511$\pm$13 & 3824$\pm$218 \\
 A3B    & 5538$\pm$20& \textbf{4986$\pm$41} & 5580$\pm$41 & \textbf{4071$\pm$242}  \\
 \bottomrule
\end{tabular}

\end{table}

\begin{table}
\centering
    \caption{Comparison to the \textit{Protected} framework \citet{liu-protected} with zero test time adaptation (labelled $T=1$), for an apples-to-apples evaluation comparison to existing baselines. Without the online adaptation part of the Protected framework, we find robust performance (i.e. low drop in score) but not high nominal scores. $T=10$ allows Protected to adapt for limited number of rounds. }
    \label{tab:Protected-no-adaptation}
    \begin{tabular}{ lrrrrrr }
 \toprule
 Method & Unperturbed & MAD  & PA-AD & Unperturbed &MAD  & PA-AD \\
 \bottomrule
&\multicolumn{3}{c}{Hopper} & \multicolumn{3}{c}{Walker2d}\\
  \cmidrule(lr){2-4}\cmidrule(lr){5-7}
 Protected$^{T=1}$ & 3573$\pm$81 & 2398$\pm$665&  2210$\pm$385 &  5019 $\pm$ 87 & 3887 $\pm$ 492& 4480 $\pm$ 492 \\ 
 Protected$^{T=10}$ & 3691$\pm$81 & 3314$\pm$391&  3221$\pm$222 & 6001 $\pm$ 24 & 3410 $\pm$ 558  &  5520 $\pm$ 31\\ 
 A2B & 3710$\pm$11 & 3240$\pm$41  & 2441 $\pm$31 & 4760$\pm$61 & 4636$\pm$87 &3997$\pm$214 \\
 A3B   & 3766$\pm$23   & \textbf{3370$\pm$275} & 2580$\pm$92 &  5341$\pm$60  & \textbf{5025$\pm$94} &4931$\pm$166\\
 %Soap-CCER & 3685  $\pm$ 45 & 2320 $\pm$ 130 &  \textbf{3803$\pm$ 35}\\ 

 \toprule
 % \hline
 %Algorithm& Unperturbed & WC Policy & PGD $\epsilon\!\!=\!\frac{5}{255}$ \\
  &\multicolumn{3}{c}{HalfCheetah} & \multicolumn{3}{c}{Ant}\\
  \cmidrule(lr){2-4}\cmidrule(lr){5-7}
 Protected$^{T=1}$ & 4777$\pm$360  & 3997$\pm$285 & 2331$\pm$277 & 4620$\pm$32 & 4264$\pm$166 & 3103$\pm$ 96\\
 Protected$^{T=10}$ & 5722$\pm$58 &  5296$\pm$411  & 4522$\pm$450 & 4747$\pm$59 & 4688$\pm$201 & 4186$\pm$8\\
 A2B & 5192 $\pm$56 & 4855$\pm$ 120 &4393$\pm$79 & 5511$\pm$13 & 3824$\pm$218 & 2821 $\pm$ 312\\
 A3B    & 5538$\pm$20& \textbf{4986$\pm$41} & 4478$\pm$67 & 5580$\pm$41 & \textbf{4071$\pm$242} & 3205$\pm$275 \\
 \bottomrule
\end{tabular}

\end{table}

\begin{table}
    \centering
     \caption{Empirical analysis between single-state ACoE and LSTM-ACoE on discrete-action domains (top, \textit{highway-env}) and contiuous-action domains (bottom, \textit{Mujoco}). Single-state PPO included as a point of reference.}
    \label{tab:lstm-acoe}
    \begin{tabular}{ lrrrrrrr }
 \toprule
 Method & Unperturbed & MAD & PGD& Unperturbed &MAD  & PGD \\
 \bottomrule
 &\multicolumn{3}{c}{highway-fast-v0} & \multicolumn{3}{c}{merge-v0}\\
 \cmidrule(lr){2-4}\cmidrule(lr){5-7}
 PPO &28.8$\pm$5.42         & 13.63$\pm$19.85   &15.21$\pm$16.1 &14.94$\pm$0.01       &10.2$\pm$0.02   &10.42$\pm$0.95\\
 %RADIAL& {28.55$\pm$0.01}  &2.42$\pm$1.3 &14.97$\pm$3.1 & {14.86$\pm$0.01} & 11.29$\pm$0.01   & 11.04$\pm$0.91 \\
 A3B & 25.8$\pm$0.01   & {24.21$\pm$0.01} &{22.61$\pm$14.1} & {14.91$\pm$0.17}  &{14.88$\pm$0.17}   &{14.89$\pm$0.17}\\
 A3B-LSTM & 28.8$\pm$0.01   & {25.21$\pm$0.01} &{23.03$\pm$14.1} & {14.96$\pm$0.1}  &{14.88$\pm$0.1}   &{14.90$\pm$0.15}\\
 \toprule
 &\multicolumn{3}{c}{Halfcheetah} & \multicolumn{3}{c}{Hopper}\\
 \cmidrule(lr){2-4}\cmidrule(lr){5-7}
 PPO & 5794$\pm$12 & 1491$\pm$20  & 5620$\pm$29 &4128 $\pm$ 56        & 1110$\pm$32    &5002 $\pm$ 20 \\
 A3B &  5538$\pm$20 & 4986$\pm$41 & 5110$\pm$22 & 3766$\pm$23 & 3370$\pm$275 & 3465$\pm$17 \\
 A3B-LSTM & 5641$\pm$34 & 5002$\pm$67 & 5171$\pm$88 & 3729$\pm$45 & 3411$\pm$137 & 3453$\pm$ 21 \\
 \bottomrule
 \end{tabular}
   
\end{table}

\begin{table}
    \centering
     \caption{Ablation study: relaxing test-time attacker constraint $\epsilon$ shows lower score degradation in ACoE agents than SOTA Protected agents.}
    \label{tab:attack-eps-ablation}
    \begin{tabular}{ lrrrrr }
 \toprule
 Method & MAD attack $\epsilon$ = 0.15 & = 0.175 & = 0.2& = 0.3\\
 \bottomrule
 Halfcheetah & & & & \\
 \cmidrule(lr){1-2} 
    A3B & 4986 $\pm$ 41 	&5008 $\pm$ 259 	&4907 $\pm$ 200 	&3896 $\pm$ 1477 \\
    Protected$^{T=10}$ & 4551 $\pm$ 843 	&4391$\pm$ 729 	&3855 $\pm$ 1718 	&2410 $\pm$ 1880\\
    \midrule
 Hopper &&&&\\
 \cmidrule(lr){1-2}
 A3B & 3512$\pm$112 & 3470$\pm$ 66 & 3367$\pm$ 208 & 3023 $\pm$ 348 \\
 Protected$^{T=10}$ & 3484$\pm$73 &  3312$\pm$119 & 3290$\pm$ 249 & 2705$\pm$396 \\ 
    \bottomrule
 \end{tabular}
 \end{table}

\begin{table}[]
    \centering
    \caption{Ablation study: training parameters. We train several different ACoE models in Mujoco-halfcheetah, varying the denoted parameters. We determine that the robustness-sensitivity parameter $\lambda$ is not sensitive to small changes. We find no significant impact of the neighborhood sample size on performance.}
    \begin{tabular}{lrrrrrr}
    \toprule
       \textbf{$\lambda$ value:}   &  0.1 & 0.19 & 0.2 & 0.21 & 0.3 & 0.5 \\
       \cmidrule(lr){2-7}
       \textbf{ACoE unperturbed: } & 5620 $\pm$ 40 & 5578 $\pm$ 38 & 5538 $\pm$ 20 & 5557 $\pm$ 19 & 4994 $\pm$ 12 & 4286 $\pm$ 23 \\
       \textbf{ACoE vs. MAD: } & 4897 $\pm$ 62 & 4971 $\pm$ 47 & 4986 $\pm$ 41 &  	5002 $\pm$ 48 & 4731 $\pm$ 28 & 4021 $\pm$ 30 \\
         \midrule 
        \textbf{\# Nbhd samples: } & \multicolumn{2}{c}{2} & \multicolumn{2}{c}{10} & \multicolumn{2}{c}{20} \\
        \cmidrule(lr){2-7}
         \textbf{ACoE unperturbed: } & \multicolumn{2}{c}{5521$\pm$23} & \multicolumn{2}{c}{5528$\pm$20} & \multicolumn{2}{c}{5535$\pm$13} \\
         \textbf{ACoE vs. MAD: } & \multicolumn{2}{c}{4981$\pm$35} & \multicolumn{2}{c}{4986$\pm$41} & \multicolumn{2}{c}{4990$\pm$38} \\
    \end{tabular}
    \label{tab:training-ablation}
\end{table}

\subsection{Ablation studies on hyperparameters} In Tables \ref{tab:attack-eps-ablation} and \ref{tab:training-ablation}, we examine sensitivities to different training parameters used in the ACoE framework. We train several different ACoE models in Mujoco-halfcheetah, varying the denoted parameters. We determine that while the robustness-sensitivity parameter $\lambda$ does have some effect on the robustness/value tradeoff, it is not sensitive to small changes. We find no significant impact of the neighborhood sample size on performance, due to the use of Softmax which favors extreme values. \par
In Table \ref{tab:lstm-acoe}, we observe the improvements made to ACoE when including a two-state LSTM history as the Protected framework uses, and find that while the performance does marginally increase the unperturbed score. However, the trade-off is expensive, as applying ACoE to each state in a history is combinatorially complex. 

\section{Subjective Analysis}
\begin{figure}[h]
    \centering
    \includegraphics[width=\linewidth]{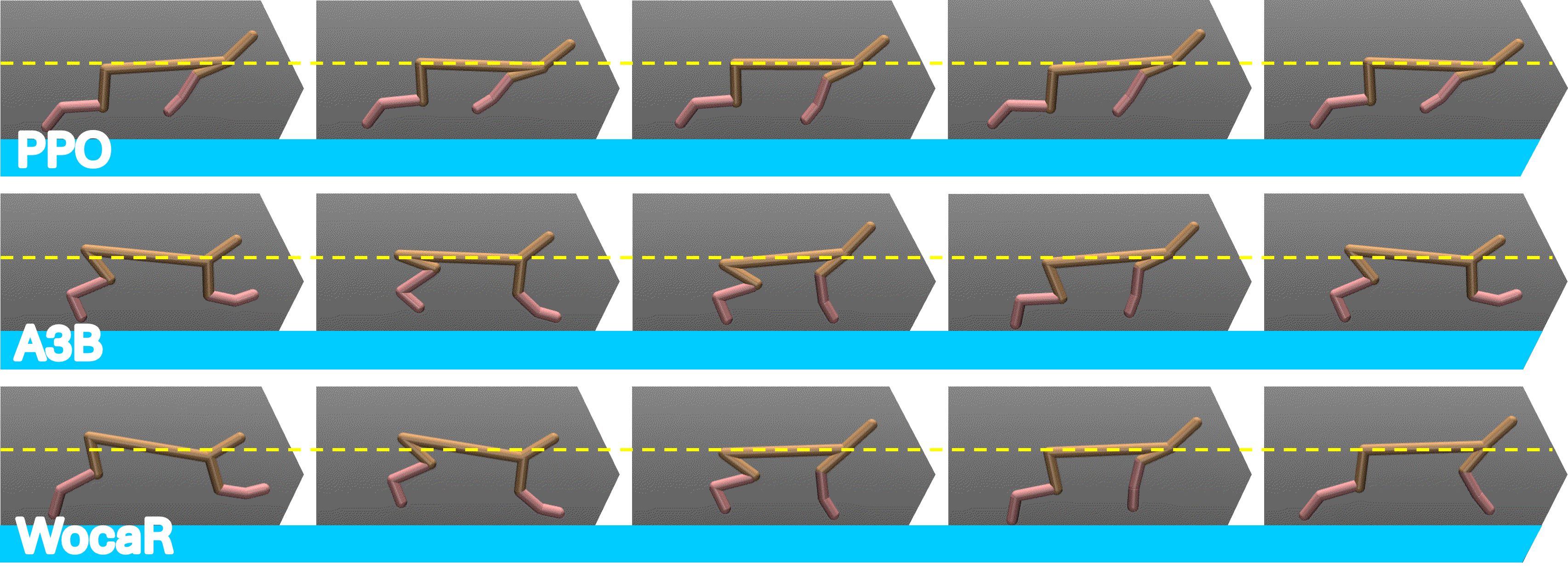}
    \caption{Last 5 frames of PPO, A3B, and WocaR agents (top to bottom), on MuJoCo-\textit{HalfCheetah}. PPO deviates the least from the dashed center-mass line, and has the least balanced gait. WocaR has arguably the most stable posture when noting the faster front leg recovery of A3B, but our empirical results suggest optimizing maximum stability is not always necessary. Full GIFs: tinyurl.com/a3b-gif}
    \label{fig:cheetah-frames}
\end{figure}
In Figure \ref{fig:cheetah-frames}, we show the visual differences frame-by-frame between PPO, A3B, and Wocar-trained models. A3B and Wocar agents exhibit visually similar behavior, which are distinctly more stable than the PPO-learned behavior. Subjectively speaking, the robust behavior is more realistic and accurately depicts how one would expect the agent to move, while the PPO behavior is more of an exploitation of the MuJoCo physics engine than a realistic behavior. Under adversary this becomes relevant: the niche value-optimal exploitative movement of the PPO agent is in turn exploited by an adversary, while the robust models can retain their stability. 

\section{Training Details and Hyperparameters}
\subsection{Model Architecture}
Our DQN and PPO models follow settings common to the current lineage of robust RL work (SA-MDP, Radial, WocaR, RAD). For C-ACoE estimator functions, we use two 64x hidden layers with a single linear output layer, congruent to the CCER estimator in RAD and Worst-value estimator in WocaR.
For Atari image domains, we use a convolutional layer with an 8x8 kernel, stride of 4 and 32 channels, a convolutional layer with a 4x4 kernel, stride of 2 and 64 channels, and a final convolutional layer with a 3x3 kernel, stride of 1 and 64 channels. Each layer is followed by a ReLU activation, and finally feeds into a fully connected output. \par
The LSTM models use a 64x64 hidden layer size with linear layers for input and output.
\subsection{Training Hyperparameters}
We train our methods for 900 episodes for all MuJoCo environments, using an annealed (Adam) learning rate of $0.005$. The robustness hyperparameter $\lambda$ is set to $0.2$ for all of our models, which is the same as the robustness hyperparameters found in prior works \citet{oikarinen2021robust, liang2022efficient, belaire-ccer, zhangsadqn}. The attack neighborhood sample size is set to $10$, and the training attack neighborhood radius is set to $\epsilon=0.1$, both tuned from sets in the range $\pm100\%$. All other hyperparameters are the same as those used in \citet{liang2022efficient}, which is open-sourced at https://github.com/umd-huang-lab/WocaR-RL.
\subsection{Hardware}
We train our linear models on an NVIDIA Tesla V100 with 16gb of memory, and LSTM models on an NVIDIA L40 32gb GPU.
%%%%%%%%%%%%%%%%%%%%%%%%%%%%%%%%%%%%%%%%%%%%%%%%%%%%%%%%%%%%%%%%%%%%

\end{document}